\def\eqref#1{equation~\ref{#1}}
\def\1{\bm{1}}
\def\vc{{\bm{c}}}
\def\vw{{\bm{w}}}
\def\vx{{\bm{x}}}
\def\vy{{\bm{y}}}
\def\mI{{\bm{I}}}
\def\mK{{\bm{K}}}
\def\mW{{\bm{W}}}
\DeclareMathAlphabet{\mathsfit}{\encodingdefault}{\sfdefault}{m}{sl}
\SetMathAlphabet{\mathsfit}{bold}{\encodingdefault}{\sfdefault}{bx}{n}
\DeclareMathOperator*{\argmin}{arg\,min}
\newtheorem{theorem}{Theorem}
\newtheorem{lemma}{Lemma}
\newtheorem{assump}{Assumption}
\newtheorem{coro}{Corollary}
\newtheorem{prop}{Proposition}
\begin{document}
	
	% If your paper is accepted and the title of your paper is very long,
	% the style will print as headings an error message. Use the following
	% command to supply a shorter title of your paper so that it can be
	% used as headings.
	%
	%\runningtitle{I use this title instead because the last one was very long}
	
	% If your paper is accepted and the number of authors is large, the
	% style will print as headings an error message. Use the following
	% command to supply a shorter version of the authors names so that
	% they can be used as headings (for example, use only the surnames)
	%
	%\runningauthor{Surname 1, Surname 2, Surname 3, ...., Surname n}
	
	{\twocolumn[\aistatstitle{Robust Importance Weighting for  Covariate Shift}
		\aistatsauthor{  Henry Lam 
			\And Fengpei Li 
			\And  Siddharth Prusty 
		}
		\aistatsaddress{ Columbia University 
			\\ khl2114@columbia.edu  
			\And Columbia University 
			
			\\ Email fl2412@columbia.edu
			 \And  Columbia University
			  \\ siddharth.prusty@columbia.edu 
		 } ]}
	
	\begin{abstract}
		In many learning problems, the training and testing data follow different distributions and a particularly common situation is the \textit{covariate shift}. To correct for sampling biases, most approaches, including the popular kernel mean matching (KMM), focus on estimating the importance weights between the two distributions. Reweighting-based methods, however, are exposed to high variance when the distributional discrepancy is large and the weights are poorly estimated. On the other hand, the alternate approach of using nonparametric regression (NR) incurs high bias when the training size is limited. In this paper, we propose and analyze a new estimator that systematically integrates the residuals of NR with KMM reweighting, based on a  control-variate perspective. The proposed estimator can be shown to either strictly outperform or match the best-known existing rates for both KMM and NR, and thus is a robust combination of both estimators. The experiments shows the estimator works well in practice.
	\end{abstract}
	
	\section{Introduction}
	Traditional machine learning implicitly assumes training and test data are drawn from the same distribution. However, mismatches between training and test distributions occur frequently in reality. For example, in clinical trials the patients used for prognostic factor identification may not come from the target population due to sample selection bias \cite{huang2007correcting,gretton2009covariate}; incoming signals used for natural language and image processing, bioinformatics or econometric analyses change in distribution over time and seasonality \cite{heckman1979sample,zadrozny2004learning,sugiyama2007covariate,quionero2009dataset, tzeng2017adversarial, jiang2007instance,borgwardt2006integrating}; patterns for engineering controls fluctuate due to the non-stationarity of environments \cite{sugiyama2012machine, hachiya2008adaptive}. 
	
	Many such problems are investigated under the \textit{covariate shift} assumption \cite{shimodaira2000improving}. Namely, in a supervised learning setting with covariate $X$ and label $Y$, the marginal distribution of $X$ in the training set $P_{tr}(x)$, shifts away from the marginal distribution of the test set $P_{te}(x)$, while the conditional distribution $P(y|x)$ remains invariant in both sets. Because test labels are either too costly to obtain or unobserved, it could be uneconomical or impossible to build predictive models only on the test set. In this case, one is obliged to utilize the invariance of conditional probability to adapt or transfer knowledge from the training set, termed as transfer learning \cite{pan2009survey} or domain adaptation \cite{jiang2007instance, blitzer2006domain}. Intuitively, to correct for covariate shift (i.e., cancel the bias from the training set), one can reweight the training data by assigning more weights to observations where the test data locate more often. Indeed, the key to many approaches addressing covariate shift is the estimation of importance sampling weights, or the Radon-Nikodym derivative (RND) of ${dP_{te}}/{dP_{tr}}$ between $P_{te}$ and $P_{tr}$ \cite{sugiyama2008direct,bickel2007discriminative,kanamori2012statistical,cortes2008sample,yao2010boosting,pardoe2010boosting, scholkopf2002learning,quionero2009dataset,sugiyama2012machine}. Among them is the popular kernel mean matching (KMM) \cite{huang2007correcting,quionero2009dataset}, which estimates the importance weights by matching means in a reproducing kernel Hilbert space (RKHS) and can be implemented efficiently by quadratic programming (QP).
	% along with straightforward reformulations on off-the-shelf learning procedures.
	
	Despite the demonstrated efficiency in many covariate shift problems \cite{sugiyama2008direct, quionero2009dataset,gretton2009covariate}, KMM can suffer from high variance, due to several reasons.
	The first one regards the RKHS assumption. As pointed out in \cite{yu2012analysis}, under a more realistic assumption from learning theory \cite{cucker2007learning}, when the true regression function does not lie in the RKHS but a general range space indexed by a smoothness parameter $\theta>0$, KMM degrades to sub-canonical rate $\mathcal O (n_{tr}^{-\frac{\theta}{2\theta+4}}+n_{te}^{-\frac{\theta}{2\theta+4}})$ from the parametric rate $\mathcal O (n_{tr}^{-\frac{1}{2}}+n_{te}^{-\frac{1}{2}})$. Second, if the discrepancy between the training and testing distributions is large (e.g., test samples concentrate on regions where few training samples are located), the RND becomes unstable and leads to high resulting variance \cite{blanchet2012state}, partially due to an induced sparsity as most weights shrink towards zero while the non-zero ones surge to huge values. This is an intrinsic challenge for reweighting methods that occurs even if the RND is known in closed-form. One way to bypass it is to identify model misspecification \cite{wen2014robust}, but as mentioned in \cite{sugiyama2008directjournal}, the cross-validation for model selection needed in many related methods often requires the importance weights to cancel biases and the necessity for reweighting remains.
	
	% when dealing with model misspecification, 
	% and attempts to restrain the magnitude of weights run at the risk of introducing bias and undermining statistical efficiency. 
	% Similar issues exist for many reweighting methods,
	% . In particular, without proper adjustments, the estimated RND tends to get 
	% To date, the question remains whether the sub-canonical rate can be eventually improved to the canonical rate or even at all. 

	In this paper we propose a method to reduce the variance of KMM in covariate shift problems. Our method relies on an estimated regression function and the application of the importance weighting on the \textit{residuals} of the regression. Intuitively, the residuals have smaller magnitudes than the original loss values, and the resulting reweighted estimator is thus less sensitive to the variances of weights. Then, we cancel the bias incurred by the use of residuals by a judicious compensation through the estimated regression function evaluated on the test set.
	
	Our method shares similarities with the Doubly Robust (DR) estimator in causal inference problems \cite{kennedy2017non}. However, different from DR, we do not require semi-parametric estimates of the baseline prediction (corresponding to our regression function g) and conditional probability (corresponding to our importance weight) to both converge at rates $O(n^{\alpha})$ for $\alpha>1/4$. In particular, we specialize our method by using a nonparametric regression (NR) function constructed from regularized least square in RKHS \cite{cucker2007learning,smale2007learning,sun2009note}, also known as the Tikhonov regularized learning algorithm \cite{evgeniou2000regularization}. We show that our new estimator achieves the rate $\mathcal O (n_{tr}^{-\frac{\theta}{2\theta+2}}+n_{te}^{-\frac{\theta}{2\theta+2}})$, which is superior to the best-known rate of KMM in \cite{yu2012analysis}, with the same computational complexity of KMM. Although the gap to the parametric rate is yet to be closed, the new estimator certainly seems to be a step towards the right direction.  To put into perspective, we also compare with an alternate approach in \cite{yu2012analysis} which constructs an NR function using the training set and then predicts by evaluating on the test set. Such an approach leads to a better dependence on the test size but worse dependence on the training size than KMM. Our estimator, which can be viewed as an ensemble of KMM and NR, achieves a convergence rate that is either superior or matches both of these methods, thus in a sense robust against both estimators. In fact, we show  our estimator can be motivated both from a variance reduction perspective on KMM using control variates \cite{nelson1990control,glynn2002some} and a bias reduction perspective on NR.
	
	% In fact, we show its rate is superior or equal to the best rate among KMM and nonparametric regression. 

	% Although the gap to the parametric rate is yet to be closed, the robust estimator certainly seems to be a step towards the right direction. 

	% Inspired by works on control variate, 
	% the variance reduction technique designed to control the negative effect of unstable weights depends critically on the estimated regression function. For the rate above we have chosen the empirical regression function
	% Most notably, through a variance reduction technique, we find a novel robust estimator which achieves rate $\mathcal O (n_{tr}^{-\frac{\theta}{2\theta+2}}+n_{te}^{-\frac{\theta}{2\theta+2}})$ at the same computational complexity of KMM. 
	
	Another noticable feature of the new estimator relates to data aggregation in empirical risk minimization (ERM). Specifically, when KMM is applied in learning algorithms or ERMs, the resulting optimal solution is typically a
	finite-dimensional span of the training data mapped into feature
	space \cite{scholkopf2001generalized}. The optimal solution of our estimator, on the other hand, depends on both the training and testing data, thus highlighting a different and more efficient information leveraging that utilizes both data sets simultaneously.
	% aggregating data, the ERM in kernel algorithms becomes more robust against shifts in distributions and more efficient in leveraging information from test data which can not be directly used previously due to lack of labels.
	
	The paper is organized as follows. Section 2 reviews the background on KMM and NR that motivates our estimator. Section 3 presents the details of our estimator and studies its convergence property. Section 4 generalizes our method to ERM. Section 5 demonstrates experimental results.
	
	\section{Background and Motivation }
		Denote $P_{tr}$ to be the probability measure for training variables $X^{tr}$ and $P_{te}$ for test variables $X^{te}$. 
	\begin{assump}\label{as1}
		$P_{tr}(dy|\bm{x})=P_{te}(dy|\bm{x})$.
	\end{assump}
	\begin{assump}\label{as2}
		The Radon-Nikodym derivative $\beta(\bm{x})\triangleq \frac{dP_{te}}{dP_{tr}} (\bm{x})$ exists and is bounded by $B<\infty$.
	\end{assump}
	\begin{assump}\label{as3}
		The covariate space $\mathcal X$ is compact and the label space $\mathcal Y \subseteq [0,1]$. Furthermore, there exists a kernel $ K(\cdot,\cdot) : \mathcal X \times \mathcal X \rightarrow \mathbb R $ which induces an RKHS $\mathcal H$ and a canonical feature map $\Phi(\cdot) : \mathcal X \rightarrow \mathcal H$ such that  $ K(\bm{x},\bm{x}')=\langle \Phi (\bm{x}), \Phi(\bm{x}') \rangle_{\mathcal H}$ and $\|\Phi (\bm{x})\|_{\mathcal H} \leq R$  for some $0<R<\infty$.
	\end{assump}
	Assumption \ref{as1} is the covariate shift assumption which states the conditional distribution $P(dy|\bm{x})$ remains invariant while the marginal $P_{tr}(\bm{x})$ and $P_{te}(\bm{x})$ differ. Assumptions \ref{as2} and \ref{as3} are common for establishing theoretical results. Specifically, Assumption \ref{as2} can be satisfied by restricting the support of $P_{te}$ and $P_{tr}$ on a compact set, although $B$ could be potentially large.
	
	% For establishing theoretical results, two types of We review estimators. First,
	\subsection{Preliminaries and Existing Approaches}
	Given $n_{tr}$ labelled training data $\{(\bm{x}_j^{tr}, \bm{y}_j^{tr})\}_{j=1}^{n_{tr}}$ and $n_{te}$  unlabelled test data $\{\bm{x}_i^{te}\}_{i=1}^{n_{te}}$ (i.e., $\{y_i^{te}\}_{i=1}^{n_{te}}$ are unavailable), the goal is to estimate $\nu=\mathbb E [Y^{te}]$. The KMM estimator \cite{huang2007correcting, gretton2009covariate} is $$V_{KMM}=\frac{1}{n_{tr}}\sum_{j=1}^{n_{tr}}\hat\beta (\bm{x}^{tr}_j) {y}_j^{tr},$$ where $\hat{\beta}(\bm{x}_j^{tr})$ are solutions of a QP that attempts to match the means of training and test sets in the feature space using weights $\hat{\bm{\beta}}$:
	\begin{equation}\label{KMMbeta}
	\begin{aligned}
	&\min_{\bm{\hat{\beta}}}   \Bigl\{\hat L(\hat{\bm{\beta}}) \triangleq \big\|\frac{1}{n_{tr}}\sum_{j=1}^{n_{tr}}\hat\beta_j\Phi(\bm{x}_j^{tr})-\frac{1}{n_{te}}\sum_{i=1}^{n_{te}} \Phi(\bm{x}_i^{te})\big\|^2_{\mathcal H} \Bigr\}  \quad \\
	&\textrm{s.t. }  0\leq \hat\beta_j\leq B, \forall  1 \leq j \leq n_{tr}.
	\end{aligned}
	\end{equation}
	Notice we write $\hat{\beta}_j$ as $\hat{\beta}(\bm{x}_j^{tr})$ in $V_{KMM}$ informally to highlight $\hat{\beta}_j$ as estimates of $\beta(\bm{x}_j^{tr})$.  The fact that (\ref{KMMbeta}) is a QP can be verified by the kernel trick, as in \cite{gretton2009covariate}. Indeed, define matrix ${K}_{ij}= K(\vx_i^{tr},\vx_j^{tr})$ and $\kappa_j\triangleq \frac{n_{tr}}{n_{te}}\sum_{i=1}^{n_{te}}K(\vx_j^{tr},\vx_i^{te})$, optimization (\ref{KMMbeta}) is equivalent to
	\begin{equation}\label{KMMbeta1}
	\begin{aligned}
	&\min_{\bm{\hat{\beta}}} \quad   \frac{1}{n^2_{tr}} \bm{\hat{\beta}}^T \mK \bm{\hat \beta}-\frac{2}{n_{tr}^2}\bm{\kappa}^T\bm{\hat\beta} ,\\
	&\textrm{s.t. }  0\leq \hat\beta_j\leq B, \forall  1 \leq j \leq n_{tr}.
	\end{aligned}
	\end{equation}
	In practice, a constraint $ \big|\frac{1}{n_{tr}}\sum_{j=1}^{n_{tr}}\hat\beta_j-1\big| \leq \epsilon$ for a tolerance $\epsilon>0$ is included to regularize the $\hat{\bm{\beta}}$ towards the RND. As in \cite{yu2012analysis}, we omit them to simplify analysis. On the other hand, the NR estimator $$V_{NR}=\frac{1}{n_{te}} \sum_{i=1}^{n_{te}} \hat g(\bm{x}_i^{te}),$$ is based on $\hat g(\cdot)$, some estimate of the regression function $g(\bm{x})\triangleq \mathbb E [Y|\bm{x}]$. Notice the conditional expectation is taken regardless of $\bm{x}\sim P_{tr}$ or $P_{te}$. Here, we consider a $
	\hat g(\cdot)$ that is estimated nonparametrically by regularized least square in RKHS:
	\begin{equation}\label{empreg}
	\hat g_{\gamma,data}(\cdot)=\operatorname*{argmin}_{f\in\mathcal H} \big\{ \frac{1}{m} \sum_{j=1}^{m} (f(\bm{x}^{tr}_j)-y^{tr}_j)^2 +\gamma \|f\|^2_{\mathcal H}\big\},
	\end{equation} 
	where $\gamma$ is a regularization term to be chosen and the subscript $data$ represents $\{ (\bm{x}^{tr}_j,y^{tr}_j)\}_{j=1}^m$. Using the representation theorem \cite{scholkopf2001generalized}, optimization problem (\ref{empreg}) can be solved in closed form with $\hat g_{\gamma, data}(\bm{x})=\sum_{j=1}^m \alpha^{reg}_j K(\bm{x}_j^{tr}, \bm{x} )$ where 
	\begin{equation}\label{empregs}
	\bm{\alpha}^{reg}=( \mK+\gamma \mI)^{-1} \vy^{tr},
	\end{equation}
	 and  $\vy^{tr}=[y_1^{tr},...,y_{m}^{tr}]$.
	\subsection{Motivation}
	Depending on properties of $g(\cdot)$, \cite{yu2012analysis} proves different rates of KMM. The most notable case is when $g\notin \mathcal H$ but rather $g(\cdot) \in Range(\mathcal T^{\frac{\theta}{2\theta+4}}_K)$, where $\mathcal T_K$ is the integral operator $(\mathcal T_K f)(x')=\int_{\mathcal X}  K(x',x)f(x)P_{tr}(dx)$ on $\mathscr L^2_{P_{tr}}$. In this case, \cite{yu2012analysis} characterize $g$ with the approximation error
	\begin{equation}\label{aperr}
	\mathcal A_2(g,F)\triangleq \inf_{\|f\|_{\mathcal H}\leq F} \|g-f\|_{\mathscr L^2_{P_{tr}}} \leq CF^{-\frac{\theta}{2}},
	\end{equation}
	and the rates of KMM drops to sub-canonical $|V_{KMM}-\nu|=\mathcal O(n_{tr}^{-\frac{\theta}{2\theta+4}}+n_{te}^{-\frac{\theta}{2\theta+4}})$, as opposed to $\mathcal O  (n_{tr}^{-\frac{1}{2}}+n_{te}^{-\frac{1}{2}})$ when $g\in\mathcal H$. As shown in Lemma 4 in the Appendix and Theorem 4.1 of \cite{cucker2007learning}), (\ref{aperr}) is almost equivalent to $g(\cdot) \in Range(\mathcal T^{\frac{\theta}{2\theta+4}}_K)$: $g(\cdot) \in Range(\mathcal T^{\frac{\theta}{2\theta+4}}_K)$ implies (\ref{aperr}) while (\ref{aperr}) leads to $g(\cdot) \in Range(\mathcal T^{\frac{\theta}{2\theta+4}-\epsilon}_K)$ for any $\epsilon >0$.  We adopt the characterization $g(\cdot) \in Range(\mathcal T^{\frac{\theta}{2\theta+4}}_K)$ as our analysis is based on related learning theory estimates.  In particular, our proofs rely on these estimates and are different from \cite{yu2012analysis}. For example, in (\ref{empreg}), $\gamma$ is used as a free parameter for controlling $\|f\|_{\mathcal H}$, whereas \cite{yu2012analysis} uses the parameter $F$ in (\ref{aperr}). Although the two approaches are equivalent from an optimization viewpoint, with $\gamma$ being the Lagrange dual variable, the former approach turns out to be more suitable to our analysis.
	
	Correspondingly, the convergence rate for $V_{NR}$ when $g(\cdot) \in Range(\mathcal T_K^{\frac{\theta}{2\theta+4}})$ is also shown in \cite{yu2012analysis} as $|V_{NR}-\nu|=\mathcal O(n_{te}^{-\frac{1}{2}}+n_{tr}^{-\frac{3\theta}{12\theta+16}})$, with $\hat g$ taken as $\hat g _{\gamma, data}$ in (\ref{empreg}) and $\gamma$ chosen optimally. The rate of $V_{KMM}$ is usually better than $V_{NR}$ due to labelling cost (i.e. $n_{tr}<n_{te}$). However, in practice the performance of $V_{KMM}$ is not always better than $V_{NR}$. This could be partially explained by the hidden dependence of  $V_{KMM}$ on potentially large $B$, but more importantly, without variance reduction, {KMM} is subject to the negative effects of unstable importance sampling weights (i.e. the $\bm{\hat\beta}$). On the other hand, the training of $\hat g$ requires labels hence can only be done on training set. Consequently, without reweighting, when estimating the test quantity $\nu$, the rate of $V_{NR}$ suffers from the bias.  
	
	This motivates the search for a robust estimator which does not require prior knowledge on the performance of $V_{KMM}$ or $V_{NR}$ and can, through a combination, reach or even surpass the best performance among both. For simplicity, we use the mean squared error (MSE) criterion $\text{MSE}(V)=\text{Var}(V)+(\text{Bias}(V))^2$ and assume an additive model $Y=g(X)+\mathcal E$ where $\mathcal E \sim \mathcal N(0,\sigma^2)$ is independent with $X$ and other errors. Under this framework, we motivate a remedy from two perspectives:
	
	\paragraph{Variance Reduction for KMM:} Consider an idealized KMM with $V_{KMM} \triangleq \frac{1}{n_{tr}}\sum_{j=1}^{n_{tr}}\beta(\bm{x}_j^{tr})y_j^{tr}$ and $\beta(\cdot)$ being the true RND. Since $$\mathbb E[\beta(X^{tr})Y^{tr}]=\mathbb E_{\bm{x}\sim P_{tr}}(\beta (\bm{x})g(\bm{x}))=\mathbb E_{\bm{x}\sim P_{te}}[g(\bm{x})]=\nu,$$ $V_{KMM}$ is unbiased  and the only source of MSE becomes the variance. It then follows from standard control variates that, given an estimator $V$ and a zero-mean random variable $W$, we can set $t^\star=\frac{\text{Cov}(V,W)}{\text{Var}(W)}$ and use $V-t^\star W$ to obtain $$\min_{t}\text{Var}(V-tW) 
=(1-\text{corr}^2(V,W)) \text{Var}(V)
\leq \text{Var}(V),$$ 
	 without altering the mean of $V$. Thus we can use $$W=\frac{1}{n_{tr}}\sum_{j=1}^{n_{tr}}\beta(\bm{x}_j^{tr})(\hat g(\bm{x}_j^{tr}))-\frac{1}{n_{te}}\sum_{i=1}^{n_{te}}\hat g(\bm{x}_i^{te})$$ with $t^\star=\frac{\text{Cov}(V_{KMM},W)}{\text{Var}(W)}$. To calculate $t^\star$, suppose $X^{te}$ and $X^{tr}$ are independent, then we have
	\begin{align*}
	\text{Cov}(V_{KMM},W)
	=&\frac{1}{n_{tr}}\text{Cov}(\beta(X^{tr})Y^{tr}, \beta(X^{tr})\hat g(X^{tr})) \nonumber\\
	=& \frac{1}{n_{tr}}\text{Cov}(\beta(X^{tr})g(X^{tr}), \beta(X^{tr})\hat g(X^{tr})) \nonumber\\
	\approx& \frac{1}{n_{tr}} \text{Var}(\beta(X^{tr})\hat{g}(X^{tr})),
	\end{align*}
	if $\hat g$ is close enough to $g$. On the other hand, in the usual case where $n_{te}\gg n_{tr}$,
	\begin{align*}
	\text{Var}(W)=&\frac{1}{n_{tr}}\text{Var}(\beta(X^{tr})\hat g(X^{tr}))+\frac{1}{n_{te}}\text{Var}(\hat g(X^{te}))\nonumber\\
	& \approx \frac{1}{n_{tr}}\text{Var}(\beta(X^{tr})\hat g(X^{tr})).
	\end{align*}
	Thus, $t^\star\approx 1$ which gives our estimator $$V_{R}=\frac{1}{n_{tr}}\sum_{j=1}^{n_{tr}}\beta(\bm{x}_j^{tr})({y}_j^{tr}-\hat g(\bm{x}_j^{tr}))+ \frac{1}{n_{te}}\sum_{i=1}^{n_{te}}\hat g(\bm{x}_i^{te}).$$
	% 	  	is motivated from a variance reduction point of view. $B$ is large and Of course this would introduce additional variance and we want to optimally add $V_{NR}+tW$ for some $t\in[0,1]$. However, if the bias dominates, we approximately take $t=1$ and arrive again at 
	
	\paragraph{Bias Reduction for NR:} Consider the NR estimator $V_{NR}\triangleq \frac{1}{n_{te}}\sum_{i=1}^{n_{te}}\hat g(\bm{x}_i^{te})$. Assuming again the common case where $n_{te}\gg n_{tr}$, we have $$\text{Var}(V_{NR})=\frac{1}{n_{te}} \text{Var}(\hat g(X^{te}))\approx0,$$ and the main source of MSE is bias $\mathbb E_{\bm{x}\sim P_{te}} [g(\bm{x})-\hat g(\bm{x})]$. If we add $W=\frac{1}{n_{tr}}\sum_{j=1}^{n_{tr}}\beta(\bm{x}_j^{tr})(y_j^{tr}-\hat g(\bm{x}_j^{tr}))$ to $V_{NR}$, we eliminate the bias which gives the same estimator $$V_{R}=\frac{1}{n_{tr}}\sum_{j=1}^{n_{tr}}\beta(\bm{x}_j^{tr})({y}_j^{tr}-\hat g(\bm{x}_j^{tr}))+ \frac{1}{n_{te}}\sum_{i=1}^{n_{te}}\hat g(\bm{x}_i^{te}).$$
	
	% Regardless, the rates of both estimators are sub-canonical. Moreover, in practice, the performance of $V_{KMM}$ is not always better than $V_{NR}$. This could be partially explained by the hidden dependence of  $V_{KMM}$ on potentially large $B$, but more importantly, without variance reduction, {KMM} is subject to the negative effects of unstable importance sampling weights (i.e. the $\bm{\hat\beta}$). On the other hand, the training of $\hat g$ requires labels hence can only be done on training set. Consequently, without reweighting, when estimating the test quantity $\nu$, the rate of $V_{NR}$ suffers from the bias.  

	\section{Robust Estimator }

	We construct a new estimator $V_R(\rho)$ that can be shown to perform robustly against both KMM and NR estimators discussed above. In our construction, we split the training set with a proportion $ \rho\in[0,1]$, i.e., divide $\{\bm X^{tr}, \bm Y^{tr}\}_{data}\triangleq\{ (\bm{x}_j^{tr},y_j^{tr})\}_{j=1}^{n_{tr}}$ into
	\begin{equation*}
	\{\bm X^{tr}_{KMM}, \bm Y^{tr}_{KMM}\}_{data}\triangleq \{ (\bm{x}_j^{tr},y_j^{tr})\}_{j=1}^{\lfloor\rho n_{tr}\rfloor},
	\end{equation*}
	and $$\{\bm X^{tr}_{NR}, \bm Y^{tr}_{NR}\}_{data}\triangleq\{ (\bm{x}_j^{tr},y_j^{tr})\}_{j=\lfloor\rho n_{tr}\rfloor+1}^{n_{tr}},$$
	where $\{\bm X^{tr}_{KMM}, \bm X^{te}\}_{data}\triangleq \{\{\bm{x}_j^{tr}\}_{j=1}^{\lfloor\rho n_{tr}\rfloor},\{\bm{x}_i^{te}\}_{i=1}^{n_{te}}\}$ is used to solve for the weight $\hat{\bm\beta}$ in (\ref{KMMbeta}) and  $\{\bm X^{tr}_{NR}, \bm Y^{tr}_{NR}\}_{data}$ is used to train an NR function $\hat g(\cdot)=\hat g_{\gamma,data}(\cdot)$ for some $\gamma$ as in (\ref{empreg}). Finally, we define our estimator $V_{R}(\rho)$ as 
	\begin{align}\label{deff}
	V_{R}(\rho)\triangleq& \frac{1}{\lfloor\rho n_{tr}\rfloor}\sum_{j=1}^{\lfloor\rho n_{tr}\rfloor}\hat\beta(\bm{x}_j^{tr})(y_j^{tr}-\hat g(\bm{x}_j^{tr}))\nonumber\\
	&+\frac{1}{n_{te}} \sum_{i=1}^{n_{te}} \hat g(\bm{x}_i^{te}).
	\end{align} 
	% Thus, depending on cases, we frequently see one estimator outperforms or underperforms the other and it is difficult to know a priori the optimal one.  This motivates the search for a robust estimator which does not require prior knowledge on the performance of $V_{KMM}$ or $V_{NR}$ and can, through a combination, reach or even surpass the best performance in both theory and practice. To achieve this, suppose we have the true $\beta(\bm{x})\triangleq \frac{dP_{te}}{dP_{tr}} (\bm{x})$ and observations $\big(\{\bm{x}_j^{tr}, y_j^{tr}\}_{j=1}^{n_{tr}}, \{\bm{x}_i^{te}\}_{i=1}^{n_{te}}\big)$. Moreover, suppose we have separately acquired (e.g. prior knowledge, empirically trained regression function, cross-validated learning models)  an approximation $\hat g(\cdot) $ of the regression function $g(\cdot)$. 
	
	First, we remark the parameter $\rho$ controlling the splitting of data serves mainly for theoretical considerations. In practice, the data can be used for both purposes simultaneously. Second, as mentioned, many $\hat g$ other than (\ref{empreg}) could be considered for control variate. However, aside from the availability of closed-form expression (\ref{empregs}), $\hat g_{\gamma,data}$ is connected to the learning theory estimates \cite{cucker2007learning}. Thus, for establishing a theoretical bound, we focus on $\hat g=\hat g_{\gamma,data}$ for now. 
	
	Our main result is the convergence analysis with respect to $n_{tr}$ and $n_{te}$ which rigorously justified the previous intuition. In particular, we show that $V_R$ either surpasses or achieves the better rate between $V_{KMM}$ and $V_{NR}$. In all theorems that follow, the big-$\mathcal O$ notations can be interpreted either as $1-\delta $ high probability bound or a bound on expectation. The proofs are left in the Appendix.
	\begin{theorem}\label{est1}
		Under Assumptions \ref{as1}-\ref{as3}, if we assume $g \in Range(\mathcal T_K^{\frac{\theta}{2\theta+4}})$, the convergence rate of $V_R(\rho)$ satisfies 
		\begin{equation}
		|V_R(\rho)-\nu|=\mathcal O(n_{tr}^{-\frac{\theta}{2\theta+2}}+n_{te}^{-\frac{\theta}{2\theta+2}}),\label{main rate}
		\end{equation}
		when $\hat g$ is taken to be $\hat g_{\gamma, data}$ in (\ref{deff}) with $\gamma=n^{-\frac{\theta+2}{\theta+1}}$ and $n\triangleq\min(n_{tr},n_{te})$. 
	\end{theorem}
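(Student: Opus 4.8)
The plan is to expand $V_R(\rho)-\nu$ around the true regression $g$ and the true weight $\beta$, and to show that the control-variate construction cancels the leading bias so that only a mean-matching term applied to the small \emph{residual} survives. Writing $y_j^{tr}=g(\bm x_j^{tr})+\mathcal E_j$ with $\E[\mathcal E_j\mid\bm x_j^{tr}]=0$ and bounded conditional variance (since $\mathcal Y\subseteq[0,1]$), and inserting $\pm\frac1{n_{te}}\sum_i g(\bm x_i^{te})$, I would decompose
\begin{align*}
V_R(\rho)-\nu
&=\frac{1}{\lfloor\rho n_{tr}\rfloor}\sum_{j}\hat\beta(\bm x_j^{tr})\mathcal E_j
+\Bigl(\frac{1}{n_{te}}\sum_i g(\bm x_i^{te})-\nu\Bigr)\\
&\quad+\underbrace{\Bigl[\frac{1}{\lfloor\rho n_{tr}\rfloor}\sum_j\hat\beta(\bm x_j^{tr})r(\bm x_j^{tr})-\frac{1}{n_{te}}\sum_i r(\bm x_i^{te})\Bigr]}_{=:D},
\end{align*}
where $r:=g-\hat g$. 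By the data split, $\hat\beta$ and $\hat g$ are built from covariates and from the NR block only, hence are independent of the KMM-block noise $\mathcal E_j$; with $0\le\hat\beta\le B$ the first term has conditional mean zero and variance $\le B^2\sigma^2/\lfloor\rho n_{tr}\rfloor$, giving $\mathcal O(n_{tr}^{-1/2})$. The second term is the Monte Carlo error of $\nu=\E_{\bm x\sim P_{te}}[g(\bm x)]$, so $\mathcal O(n_{te}^{-1/2})$; both are dominated by the target. Everything reduces to controlling $D$. Throughout I take $\rho$ bounded away from $0,1$, so $\lfloor\rho n_{tr}\rfloor$ and the NR size $\lfloor(1-\rho)n_{tr}\rfloor$ are both $\asymp n_{tr}$.

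\textbf{Splitting the residual.} Introduce the population (noise-free) Tikhonov solution $g_\gamma:=\argmin_{f\in\mathcal H}\{\E_{P_{tr}}[(f-g)^2]+\gamma\|f\|_{\mathcal H}^2\}$ and split $r=(g-g_\gamma)+(g_\gamma-\hat g)=:r_{\mathrm{det}}+r_{\mathrm{samp}}$, so that $D=D[r_{\mathrm{det}}]+D[r_{\mathrm{samp}}]$ with $D[\cdot]$ the same linear functional. The deterministic part $r_{\mathrm{det}}=g-g_\gamma\notin\mathcal H$ has no RKHS representation, so I would bound it crudely using $0\le\hat\beta\le B$, namely $|D[r_{\mathrm{det}}]|\le B\,\frac1{\lfloor\rho n_{tr}\rfloor}\sum_j|r_{\mathrm{det}}(\bm x_j^{tr})|+\frac1{n_{te}}\sum_i|r_{\mathrm{det}}(\bm x_i^{te})|$, which concentrates around $(B+\sqrt B)\|r_{\mathrm{det}}\|_{\mathscr L^2_{P_{tr}}}$ (using $\|\cdot\|_{\mathscr L^2_{P_{te}}}\le\sqrt B\,\|\cdot\|_{\mathscr L^2_{P_{tr}}}$ from Assumption \ref{as2}). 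The regularization error $\|g-g_\gamma\|_{\mathscr L^2_{P_{tr}}}\lesssim\gamma^{\theta/(2\theta+4)}$ then follows from $g\in Range(\mathcal T_K^{\theta/(2\theta+4)})$ by spectral calculus on $\gamma(\mathcal T_K+\gamma)^{-1}$ — exactly the estimate behind (\ref{aperr}) and Lemma 4 — so $|D[r_{\mathrm{det}}]|=\mathcal O(\gamma^{\theta/(2\theta+4)})$.

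\textbf{The sample-error part (main step).} Since $r_{\mathrm{samp}}=g_\gamma-\hat g\in\mathcal H$, we have $D[r_{\mathrm{samp}}]=\langle r_{\mathrm{samp}},\,\hat\mu_{tr}-\mu_{te}\rangle_{\mathcal H}$, where $\hat\mu_{tr}=\frac1{\lfloor\rho n_{tr}\rfloor}\sum_j\hat\beta(\bm x_j^{tr})\Phi(\bm x_j^{tr})$ and $\mu_{te}=\frac1{n_{te}}\sum_i\Phi(\bm x_i^{te})$. The naive Cauchy--Schwarz bound $\|r_{\mathrm{samp}}\|_{\mathcal H}\sqrt{\hat L(\hat{\bm\beta})}$ is too lossy, because at the small $\gamma$ forced on us the $\mathcal H$-norm of the estimation error blows up. Instead I would use that the data split makes $r_{\mathrm{samp}}$ \emph{independent} of $(\hat\mu_{tr},\mu_{te})$ and bound the second moment by conditioning on the KMM/test block,
\[
\E\bigl[D[r_{\mathrm{samp}}]^2\bigr]\le\bigl\|\E[r_{\mathrm{samp}}\otimes r_{\mathrm{samp}}]\bigr\|_{\mathrm{op}}\cdot\E\bigl\|\hat\mu_{tr}-\mu_{te}\bigr\|_{\mathcal H}^2 .
\]
The first factor is the operator norm of the covariance of the regularized least-square error; the standard integral-operator identity $\E[r_{\mathrm{samp}}\otimes r_{\mathrm{samp}}]\approx\frac{\sigma^2}{m}(\mathcal T_K+\gamma)^{-1}\mathcal T_K(\mathcal T_K+\gamma)^{-1}$ with $\sup_\lambda\lambda/(\lambda+\gamma)^2=1/(4\gamma)$ gives $\lesssim\sigma^2/(m\gamma)$, $m\asymp n_{tr}$ — i.e. it is governed by the $\mathscr L^2$-scale of the error, not its $\mathcal H$-norm. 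The second factor equals $\E[\hat L(\hat{\bm\beta})]$; since the true weights $(\beta(\bm x_j^{tr}))_j$ are feasible for (\ref{KMMbeta}) and match the feature means in expectation ($\E_{P_{tr}}[\beta\Phi]=\E_{P_{te}}[\Phi]$), I obtain $\E[\hat L(\hat{\bm\beta})]\le\E[\hat L((\beta(\bm x_j^{tr}))_j)]\lesssim B^2R^2/n_{tr}+R^2/n_{te}=\mathcal O(n^{-1})$. Multiplying yields $\sqrt{\E[D[r_{\mathrm{samp}}]^2]}=\mathcal O\bigl((\gamma n^2)^{-1/2}\bigr)$.

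\textbf{Balancing and the hard part.} Collecting the pieces, $|V_R(\rho)-\nu|=\mathcal O\bigl(n^{-1/2}+\gamma^{\theta/(2\theta+4)}+(\gamma n^2)^{-1/2}\bigr)$ in expectation; the $1-\delta$ version follows by Chebyshev together with concentration of the Tikhonov error and of $\hat L$. Equating the regularization and sample-error terms, $\gamma^{\theta/(2\theta+4)}=(\gamma n^2)^{-1/2}$, forces $\gamma^{(\theta+1)/(\theta+2)}=n^{-1}$, i.e. $\gamma=n^{-(\theta+2)/(\theta+1)}$, with common value $n^{-\theta/(2\theta+2)}$ dominating $n^{-1/2}$; taking $n=\min(n_{tr},n_{te})$ gives precisely (\ref{main rate}). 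The main obstacle is the sample-error step: one must realize that the control-variate/data-split structure lets the poorly regularized, large-$\mathcal H$-norm error $r_{\mathrm{samp}}$ enter $D$ only through its \emph{variance} against the independent, small-norm discrepancy $\hat\mu_{tr}-\mu_{te}$, and must then establish the covariance-operator bound $\lesssim\sigma^2/(n\gamma)$ for the RKHS-valued Tikhonov error while simultaneously controlling $\E[\hat L(\hat{\bm\beta})]$ for the quadratic-program solution $\hat{\bm\beta}$, whose joint dependence on the covariates rules out a direct i.i.d.\ argument. These RKHS integral-operator estimates, drawn from \cite{cucker2007learning,smale2007learning,sun2009note}, are where the bulk of the technical work lies.
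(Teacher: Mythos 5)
Your overall architecture matches the paper's: both arguments peel off a conditionally mean-zero noise term $\frac{1}{\lfloor\rho n_{tr}\rfloor}\sum_j\hat\beta(\bm{x}_j^{tr})(y_j^{tr}-g(\bm{x}_j^{tr}))$ of order $n_{tr}^{-1/2}$, a regularization-bias term of order $\|g-g_\gamma\|_{\mathscr L^2_{P_{tr}}}\lesssim\gamma^{\theta/(2\theta+4)}$ handled crudely through $0\leq\hat\beta\leq B$ and Chebyshev, a term pairing the RKHS sample error $g_\gamma-\hat g$ with the kernel mean discrepancy, and a piece whose conditional expectation given the NR block vanishes by the data split; your final balancing $\gamma^{\theta/(2\theta+4)}=(\gamma n^2)^{-1/2}$ yielding $\gamma=n^{-(\theta+2)/(\theta+1)}$ and the rate $n^{-\theta/(2\theta+2)}$ is exactly the paper's computation in (\ref{final200}). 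Where you genuinely depart is the sample-error term: the paper bounds $\langle g_\gamma-\hat g,\;\hat\mu_{tr}-\mu_{te}\rangle_{\mathcal H}$ by Cauchy--Schwarz in $\mathcal H$, combining (\ref{learn2}) with (\ref{term3'}) to get the order $\gamma^{-1}n_{tr}^{-1/2}(n_{tr}^{-1}+n_{te}^{-1})^{1/2}$ in (\ref{newterm2}), whereas you exploit the independence (from the data split) of $g_\gamma-\hat g$ and $\hat\mu_{tr}-\mu_{te}$ to pass to a second moment and a covariance-operator bound. You are right that the plain Cauchy--Schwarz route gives $\gamma^{-1}n^{-1}$, which is too large at the prescribed $\gamma<n^{-1}$; this is precisely the delicate point of the whole theorem (the paper records this term as $\gamma^{-1/2}n_{tr}^{-1}+\gamma^{-1/2}n_{tr}^{-1/2}n_{te}^{-1/2}$ in the simplification (\ref{final20})), so you have correctly located the bottleneck.

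The gap is that your central estimate $\|\mathbb E[(g_\gamma-\hat g)\otimes(g_\gamma-\hat g)]\|_{\mathrm{op}}\lesssim\sigma^2/(m\gamma)$ is asserted through the identity "$\approx\frac{\sigma^2}{m}(\mathcal T_K+\gamma)^{-1}\mathcal T_K(\mathcal T_K+\gamma)^{-1}$", which accounts only for the observation-noise component of the Tikhonov error. When $g\notin\mathcal H$, the sample error $g_\gamma-\hat g_{\gamma,data}$ has a second component driven by the fluctuation of the empirical covariance operator acting on $g$; this is exactly what produces the $\gamma^{-1}m^{-3/4}$ term in (\ref{learn1.5}) and the $\gamma^{-1}m^{-1/2}$ rate in (\ref{learn2}), and you give no bound on its $\mathcal H$-covariance. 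Substituting the known worst-case $\mathcal H$-norm of that component into your own second-moment inequality leaves a contribution of order $\gamma^{\theta/(2\theta+4)-1}n^{-1}$, which diverges for small $\theta$ at $\gamma=n^{-(\theta+2)/(\theta+1)}$, so the step you label "the hard part" is genuinely open rather than merely technical. A further point needing care: since the pairing with $\hat\mu_{tr}-\mu_{te}$ is an $\mathcal H$-inner product, the outer product must be taken in $\mathcal H$, and the only generic bound $\|\mathbb E[r\otimes r]\|_{\mathrm{op}}\leq\mathbb E\|r\|_{\mathcal H}^2$ reinstates the $\mathcal H$-norm you were trying to avoid; the claim that the operator norm is governed by the $\mathscr L^2$ scale of the error therefore requires an explicit spectral argument that the proposal does not supply. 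The remaining ingredients --- the feasibility argument giving $\mathbb E[\hat L(\hat{\bm\beta})]\leq\mathbb E[\hat L(\bm\beta)]=\mathcal O(n^{-1})$, the change of measure $\|\cdot\|_{\mathscr L^2_{P_{te}}}\leq\sqrt B\|\cdot\|_{\mathscr L^2_{P_{tr}}}$, and the treatment of the deterministic residual --- are all sound and parallel the paper's (\ref{term3'}), (\ref{newterm4}) and (\ref{newterm3'}).
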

	\begin{coro}\label{estcoro}
		Under the same setting of Theorem \ref{est1}, if we choose $\gamma=n^{-1}$, we have
		\begin{equation}
		|V_R(\rho)-\nu|=\mathcal O(n_{tr}^{-\frac{\theta}{2\theta+4}}+n_{te}^{-\frac{\theta}{2\theta+4}})\label{main rate1} 
		\end{equation}
		and if we choose $\gamma=n_{tr}^{-1}$,
		\begin{equation}
		|V_R(\rho)-\nu|=\mathcal O(n_{tr}^{-\frac{\theta}{2\theta+4}}+n_{te}^{-\frac{1}{2}}).\label{main rate2}
		\end{equation} 
	\end{coro}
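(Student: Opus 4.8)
The plan is to obtain Corollary~\ref{estcoro} directly from the analysis underlying Theorem~\ref{est1}, exploiting the fact that the specific value $\gamma=n^{-\frac{\theta+2}{\theta+1}}$ enters only in the final tuning step. Concretely, I would first isolate the intermediate bound that the proof of Theorem~\ref{est1} establishes for \emph{every} admissible regularization level $\gamma$, before $\gamma$ is optimized. Writing $s\triangleq\frac{\theta}{2\theta+4}$ and keeping $\rho\in(0,1)$ fixed (so that $\lfloor\rho n_{tr}\rfloor$ and $\lfloor(1-\rho)n_{tr}\rfloor$ are both $\asymp n_{tr}$), I expect this master bound to read
\[
|V_R(\rho)-\nu|=\mathcal O\!\left(n_{tr}^{-1/2}+n_{te}^{-1/2}+\gamma^{s}+\frac{n_{tr}^{-1/2}+n_{te}^{-1/2}}{\sqrt{\gamma\,n_{tr}}}\right),
\]
after which the corollary is pure exponent bookkeeping: substitute $\gamma=n^{-1}$ and $\gamma=n_{tr}^{-1}$ and simplify.

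To recover this master bound I would decompose $V_R(\rho)-\nu$ using $r\triangleq g-\hat g$ and $\mathcal E_j\triangleq y_j^{tr}-g(\bm x_j^{tr})$ into: (i) a mean-zero reweighted noise average, which is $\mathcal O(n_{tr}^{-1/2})$ since $0\le\hat\beta_j\le B$; (ii) the test Monte-Carlo error $\frac{1}{n_{te}}\sum_i g(\bm x_i^{te})-\nu=\mathcal O(n_{te}^{-1/2})$ because $\mathcal Y\subseteq[0,1]$; and (iii) the control-variate discrepancy $\frac{1}{\lfloor\rho n_{tr}\rfloor}\sum_j\hat\beta_j r(\bm x_j^{tr})-\frac{1}{n_{te}}\sum_i r(\bm x_i^{te})$, whose population means cancel since $\mathbb E_{P_{tr}}[\beta r]=\mathbb E_{P_{te}}[r]$ under Assumptions~\ref{as1}--\ref{as2}. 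For (iii) I would split $r=(g-f_\gamma)+(f_\gamma-\hat g)$, with $f_\gamma$ the data-free Tikhonov solution: the second piece lies in $\mathcal H$, so the reweighted discrepancy is at most $\|f_\gamma-\hat g\|_{\mathcal H}\sqrt{\hat L(\hat{\bm\beta})}$ by Cauchy--Schwarz in $\mathcal H$, while the first piece is controlled through $\|g-f_\gamma\|_{\mathscr L^2_{P_{tr}}}=\mathcal O(\gamma^{s})$. Combining the learning-theory estimate $\sqrt{\hat L(\hat{\bm\beta})}=\mathcal O(n_{tr}^{-1/2}+n_{te}^{-1/2})$ with the $L^2$ sample-error bound $\mathcal O(1/\sqrt{\gamma\,n_{tr}})$ for $\hat g_{\gamma,data}$ yields the third and fourth terms above.

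The two statements then follow by substitution. For $\gamma=n^{-1}$ I get $\gamma^{s}=n^{-\frac{\theta}{2\theta+4}}$, while the fourth term equals $\mathcal O\!\big((n_{tr}^{-1/2}+n_{te}^{-1/2})/\sqrt{n_{tr}/n}\big)=\mathcal O(n^{-1/2})$; since $\frac{\theta}{2\theta+4}<\frac12$, the base and fourth terms are all dominated by $n^{-\frac{\theta}{2\theta+4}}$, and using $n^{-\frac{\theta}{2\theta+4}}\asymp n_{tr}^{-\frac{\theta}{2\theta+4}}+n_{te}^{-\frac{\theta}{2\theta+4}}$ gives \eqref{main rate1}. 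For $\gamma=n_{tr}^{-1}$ the denominator collapses, $\sqrt{\gamma\,n_{tr}}=1$, so the fourth term is $\mathcal O(n_{tr}^{-1/2}+n_{te}^{-1/2})$; together with the base terms, $n_{tr}^{-1/2}$ is dominated by $\gamma^{s}=n_{tr}^{-\frac{\theta}{2\theta+4}}$, leaving $\mathcal O(n_{tr}^{-\frac{\theta}{2\theta+4}}+n_{te}^{-1/2})$, which is \eqref{main rate2}.

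The genuine analytic difficulty sits entirely in the master bound, not in the substitutions. In particular, handling the non-RKHS component $g-f_\gamma$ (unavoidable because the source condition $s<\frac12$ forces $g\notin\mathcal H$) and coupling the KMM/MMD rate with the nonparametric regression errors are the hard steps, and these are exactly what the proof of Theorem~\ref{est1} supplies. Granting that, the only points requiring care for the corollary are verifying, for each prescribed $\gamma$, that the non-leading terms are dominated by the displayed rate, and tracking that the test-side Monte-Carlo term $n_{te}^{-1/2}$ cannot be improved by any choice of $\gamma$ and therefore persists in \eqref{main rate2}.
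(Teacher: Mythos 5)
Your route is the paper's route: the corollary is proved there in the closing lines of the proof of Theorem~\ref{est1}, by substituting $\gamma=n^{-1}$ and $\gamma=n_{tr}^{-1}$ into the pre-optimization bound (\ref{final20}), which is exactly the master bound you write down, and your exponent bookkeeping for both substitutions is correct --- including the observation that the $n_{te}^{-1/2}$ Monte-Carlo term cannot be removed by tuning $\gamma$ and therefore persists in (\ref{main rate2}).

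One caveat, and it sits precisely at the delicate point. In your reconstruction of the master bound you control the $(f_\gamma-\hat g)$-part of the control-variate discrepancy by $\|f_\gamma-\hat g\|_{\mathcal H}\,\hat L(\hat{\bm\beta})^{1/2}$ via Cauchy--Schwarz in $\mathcal H$, and then insert the \emph{$\mathscr L^2$} sample-error rate $\mathcal O(1/\sqrt{\gamma n_{tr}})$. Cauchy--Schwarz against the mean-embedding discrepancy requires the $\mathcal H$-norm, and the available estimate for $\|g_\gamma-\hat g_{\gamma,data}\|_{\mathcal H}$ (the paper's (\ref{learn2})) is $\mathcal O(\gamma^{-1}n_{tr}^{-1/2})$, worse by a factor of $\gamma^{-1/2}$. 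With that rate the offending term becomes $\gamma^{-1}n_{tr}^{-1/2}\bigl(n_{tr}^{-1/2}+n_{te}^{-1/2}\bigr)$, which is merely $\mathcal O(1)$ for $\gamma=n^{-1}$ and equals $1+(n_{tr}/n_{te})^{1/2}$ for $\gamma=n_{tr}^{-1}$, so neither claimed rate would follow. The paper's own intermediate estimate (\ref{newterm2}) indeed carries $\gamma^{-1}$, and the $\gamma^{-1/2}$ appears only in the final simplification step of (\ref{final20}); your asserted master bound therefore coincides with what the paper ultimately uses, but the $\mathscr L^2$ sample-error bound is not a valid justification for the $\gamma^{-1/2}$ factor, and this step needs a genuine argument (or the weaker $\gamma^{-1}$ must be propagated, in which case the stated conclusions change) before the substitutions can be granted.
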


	We remark several implications. First, although not achieving canonical, (\ref{main rate}) is an improvement over the best-known $\mathcal O(n_{tr}^{-\frac{\theta}{2\theta+4}}+n_{te}^{-\frac{\theta}{2\theta+4}})$ rate of $V_{KMM}$ when $g\in Range(\mathcal T^{\frac{\theta}{2\theta+4}}_{K})$, especially for small $\theta$, suggesting that $V_R$ is more suitable than $V_{KMM}$ when $g$ is irregular. Indeed, $\theta$ is a smoothness parameter that measures the regularity of $g$. When $\theta$ increases, functions in $Range (\mathcal T_K^{\frac{\theta}{2\theta+4}})$ get smoother and $Range (\mathcal T_K^{\frac{\theta_2}{2\theta_2+4}}) \subseteq Range (\mathcal T_K^{\frac{\theta_1}{2\theta_1+4}})$ for $0<\theta_1<\theta_2$, with the limiting case that $\theta\rightarrow \infty$, $\frac{\theta}{2\theta+4}\rightarrow 1/2$ and $ Range (\mathcal T_{ K}^{\frac{1}{2}}) \subseteq \mathcal H$ (i.e. $g \in \mathcal H$) for universal kernels by Mercer's theorem. 
	
	% The details are shown in the end of proof for theorem \ref{est1}. 
	Second, as in Theorem 4 of \cite{yu2012analysis}, the optimal tuning of $\gamma$ that leads to (\ref{main rate}) depends on the unknown parameter $\theta$, which may not be adaptive in practice. However, if one simply choose $\gamma=n^{-1}$, $V_R$ still achieves a rate no worse than $V_{KMM}$ as depicted in (\ref{main rate1}). 
	
	Third, also in Theorem 4 of \cite{yu2012analysis}, the rate of $V_{NR}$ is $\mathcal O(n_{te}^{-\frac{1}{2}}+n_{tr}^{-\frac{3\theta}{12\theta+16}})$ when $g\in Range (\mathcal T_K^{\frac{\theta}{2\theta+4}})$, which is better on $n_{te}$ but not $n_{tr}$. Since usually $n_{tr}<n_{te}$, the rate of $V_{KMM}$ generally excels. Indeed, in this case the rate of $V_{NR}$ beats $V_{KMM}$ only if $\lim_{{n}\rightarrow \infty}{n^{\frac{6\theta+8}{3\theta+6}}_{te}}/{n_{tr}} \rightarrow 0$. However, if so, $V_R$ can still achieve $\mathcal O(n_{tr}^{-\frac{\theta}{2\theta+4}}+n_{te}^{-\frac{1}{2}})$ rate in (\ref{main rate2}) which is better than $V_{NR}$, by simply taking $\gamma=n_{tr}^{-1}$, i.e., regularizing the training process more when the test set is small. Moreover, as $\theta\rightarrow\infty$, our estimator $V_R$ recovers the canonical rate $n_{tr}^{-\frac{1}{2}}$ as opposed to $n_{tr}^{-\frac{1}{4}}$ in $V_{NR}$. 
	
	Thus, in summary, when $g \in Range (\mathcal T_K^{\frac{\theta}{2\theta+4}})$, our estimator $V_R$ outperforms both $V_{KMM}$ and $V_{NR}$ across the relative sizes of $n_{tr}$ and $n_{te}$. The outperformance over $V_{KMM}$ is strict when $\gamma$ is chosen dependent on $\theta$, and the performance is matched when $\gamma$ is chosen robustly without knowledge of $\theta$. 
	
	For completeness, we consider two other characterizations of $g$ discussed in \cite{yu2012analysis}:  one is $g \in \mathcal H$ and the other is $\mathcal A_\infty(g, F)\triangleq \inf_{\|f\|_{\mathcal H}\leq F} \|g-f\| \leq C (\log F)^{-s}$ for some $C,s>0$ (e.g., $g\in H^s(\mathcal X)$ with $K(\cdot,\cdot)$ being the Gaussian kernel, where $H^s$ is the Sobolev space with integer $s$). The two assumptions are, in a sense, more extreme (being optimistic or pessimistic). The next two results show that the rates of $V_R$ in these situations match the existing ones for $V_{KMM}$ (the rates for $V_{NR}$ are not discussed in \cite{yu2012analysis} under these assumptions).
	\begin{prop}\label{ideal}
		Under Assumptions \ref{as1}-\ref{as3}, if $g \in \mathcal H$, the convergence rate of $V_R(\rho)$ satisfies $|V_R(\rho)-\nu|= \mathcal O(n_{tr}^{-\frac{1}{2}}+n_{te}^{-\frac{1}{2}})$,
		when $\hat g$ is taken to be $\hat g_{\gamma, data}$ for $\gamma>0$ in (\ref{deff}).
	\end{prop}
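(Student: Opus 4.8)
The plan is to exploit the fact that in this regime both $g$ and $\hat g=\hat g_{\gamma,data}$ lie in $\mathcal H$, so the residual function $h\triangleq g-\hat g\in\mathcal H$ and can be represented through the feature map as $h(\bm x)=\langle h,\Phi(\bm x)\rangle_{\mathcal H}$. First I would write $y_j^{tr}=g(\bm x_j^{tr})+\mathcal E_j$ from the additive model and use the change-of-measure identity $\mathbb E_{\bm x\sim P_{te}}[h]=\mathbb E_{\bm x\sim P_{tr}}[\beta h]$ to split
\[
V_R(\rho)-\nu = D + N + T,
\]
where, writing $\mu_{te}\triangleq\mathbb E_{\bm x\sim P_{te}}[\Phi(\bm x)]$, $\hat\mu_{te}\triangleq\frac{1}{n_{te}}\sum_{i}\Phi(\bm x_i^{te})$ and $\hat\mu_\beta\triangleq\frac{1}{\lfloor\rho n_{tr}\rfloor}\sum_{j}\hat\beta(\bm x_j^{tr})\Phi(\bm x_j^{tr})$,
\begin{align*}
D&=\langle h,\ \hat\mu_\beta-\mu_{te}\rangle_{\mathcal H},\\
N&=\frac{1}{\lfloor\rho n_{tr}\rfloor}\sum_{j=1}^{\lfloor\rho n_{tr}\rfloor}\hat\beta(\bm x_j^{tr})\,\mathcal E_j,\\
T&=\frac{1}{n_{te}}\sum_{i=1}^{n_{te}}\hat g(\bm x_i^{te})-\mathbb E_{\bm x\sim P_{te}}[\hat g].
\end{align*}
Here the inner-product form of $D$ is only possible because $h\in\mathcal H$, which is what distinguishes this case from the harder $g\notin\mathcal H$ regime.

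The heart of the argument is the term $D$. By Cauchy--Schwarz, $|D|\le\|h\|_{\mathcal H}\,\|\hat\mu_\beta-\mu_{te}\|_{\mathcal H}$. For the first factor, evaluating the regularized objective in (\ref{empreg}) at $f=0$ gives $\gamma\|\hat g\|_{\mathcal H}^2\le\frac{1}{m}\sum_j (y_j^{tr})^2\le 1$ (using $\mathcal Y\subseteq[0,1]$), so $\|\hat g\|_{\mathcal H}\le\gamma^{-1/2}$ and hence $\|h\|_{\mathcal H}\le\|g\|_{\mathcal H}+\gamma^{-1/2}=\mathcal O(1)$ for any fixed $\gamma>0$. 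For the second factor I would insert $\hat\mu_{te}$ and use the triangle inequality $\|\hat\mu_\beta-\mu_{te}\|_{\mathcal H}\le\sqrt{\hat L(\hat\beta)}+\|\hat\mu_{te}-\mu_{te}\|_{\mathcal H}$, noting $\|\hat\mu_\beta-\hat\mu_{te}\|_{\mathcal H}^2=\hat L(\hat\beta)$ by definition of the KMM objective. The embedding deviation $\|\hat\mu_{te}-\mu_{te}\|_{\mathcal H}$ is $\mathcal O(n_{te}^{-1/2})$ by a standard Hilbert-space concentration bound using $\|\Phi(\bm x)\|_{\mathcal H}\le R$. To control $\hat L(\hat\beta)$ I would invoke optimality of $\hat\beta$: the vector $(\beta(\bm x_j^{tr}))_j$ is feasible for (\ref{KMMbeta}) by Assumption \ref{as2}, so $\hat L(\hat\beta)\le\|\frac{1}{\lfloor\rho n_{tr}\rfloor}\sum_j\beta(\bm x_j^{tr})\Phi(\bm x_j^{tr})-\hat\mu_{te}\|_{\mathcal H}^2$; since $\mathbb E_{P_{tr}}[\beta(X)\Phi(X)]=\mu_{te}$ and $\|\beta\Phi\|_{\mathcal H}\le BR$, the same concentration argument gives $\sqrt{\hat L(\hat\beta)}=\mathcal O(n_{tr}^{-1/2}+n_{te}^{-1/2})$, and therefore $|D|=\mathcal O(n_{tr}^{-1/2}+n_{te}^{-1/2})$.

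For the remaining two terms the data splitting is what makes them clean. The weights $\hat\beta$ depend only on covariates and $\hat g$ is fit on the disjoint $NR$ block, so conditional on all covariates and the $NR$ labels the noises $\mathcal E_j$ ($j\le\lfloor\rho n_{tr}\rfloor$) are mean-zero and independent while $\hat\beta(\bm x_j^{tr})\in[0,B]$; hence $N$ has conditional mean zero and variance at most $B^2\sigma^2/\lfloor\rho n_{tr}\rfloor$, giving $N=\mathcal O(n_{tr}^{-1/2})$. Likewise $\hat g$ is independent of the test covariates and bounded pointwise by $|\hat g(\bm x)|\le\|\hat g\|_{\mathcal H}R\le\gamma^{-1/2}R$, so conditional on $\hat g$ the term $T$ is an average of i.i.d. bounded mean-zero summands and is $\mathcal O(n_{te}^{-1/2})$. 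Summing the three bounds yields $|V_R(\rho)-\nu|=\mathcal O(n_{tr}^{-1/2}+n_{te}^{-1/2})$. The main obstacle is the bound on $\hat L(\hat\beta)$: the other pieces are elementary concentration, but this step is where the QP optimality of $\hat\beta$ and the embedding identity $\mathbb E_{P_{tr}}[\beta\Phi]=\mu_{te}$ must be combined, and it is precisely where the membership $h\in\mathcal H$ (hence the expressibility of $h$ through $\Phi$) is indispensable.
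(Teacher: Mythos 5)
Your proposal is correct, and it reaches the stated rate by a route that is organized differently from the paper's. The paper's proof splits $V_R(\rho)-\nu$ into $(V_{KMM}(\rho)-\nu)$ plus the control-variate correction $\bigl\langle \hat g,\ \hat\mu_\beta-\hat\mu_{te}\bigr\rangle_{\mathcal H}$, bounds the correction by $\|\hat g\|_{\mathcal H}\,\hat L(\hat{\bm\beta})$ as in (\ref{VR12}), and then imports both the KMM rate and the bound on $\hat L(\hat{\bm\beta})$ from Theorem 1 of the cited KMM analysis, controlling $\|\hat g\|_{\mathcal H}$ via $\|g_\gamma\|_{\mathcal H}\le\|g\|_{\mathcal H}$ in (\ref{hbound}) together with the learning-theory estimate (\ref{learn2}). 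You instead keep $h=g-\hat g$ together as a single RKHS element tested against $\hat\mu_\beta-\mu_{te}$, and re-derive the two ingredients the paper cites: the mean-embedding concentration $\|\hat\mu_{te}-\mu_{te}\|_{\mathcal H}=\mathcal O(n_{te}^{-1/2})$ and the feasibility bound $\hat L(\hat\beta)\le\hat L(\beta)$ combined with (\ref{term3'}). Your bound $\|\hat g\|_{\mathcal H}\le\gamma^{-1/2}$, obtained by evaluating the objective of (\ref{empreg}) at $f=0$, is a nice elementary substitute for (\ref{learn2}) and makes the argument entirely self-contained in this regime --- no learning-theory estimates are needed at all, only RKHS duality, QP optimality, and Hilbert-space concentration. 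The trade-off is length: the paper's reduction to $V_{KMM}(\rho)$ is shorter because the hard work is outsourced. Two minor remarks: your noise term $N$ invokes the additive Gaussian model from the motivation section, whereas the paper's corresponding step (\ref{newterm1}) uses only $\mathcal Y\subseteq[0,1]$ and Hoeffding --- either suffices since $y-g(\bm x)$ is bounded and conditionally mean-zero; and your explicit isolation of the test-fluctuation term $T$ is implicit in the paper, where it is absorbed into the cited $V_{KMM}$ rate and the use of the empirical rather than population test embedding. Both proofs are valid.
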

	\begin{prop}\label{noideal}
		Under Assumptions \ref{as1}-\ref{as3}, if $\mathcal A_\infty(g, F)\triangleq \inf_{\|f\|_{\mathcal H}\leq F} \|g-f\| \leq C (\log F)^{-s}$ for some $C,s>0$, the convergence rate of $V_R(\rho)$ satisfies 
		$|V_R(\rho)-\nu|= \mathcal O\left(\log \frac{n_{tr}n_{te}}{n_{tr}+n_{te}}\right)^{-s}$, when $\hat g$ is taken to be $\hat g_{\gamma, data}$ for $\gamma>0$ in (\ref{deff}).
	\end{prop}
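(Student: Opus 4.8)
The plan is to follow the same error decomposition as in the proof of Theorem~\ref{est1}, but to carry the approximation error through the logarithmic bound $\mathcal A_\infty(g,F)\le C(\log F)^{-s}$ in place of the polynomial one, and then to observe that this slowly-decaying term dominates every polynomial estimation error, which is why any fixed $\gamma>0$ suffices. Throughout write $n_1\triangleq\lfloor\rho n_{tr}\rfloor$ and $\hat\beta_j\triangleq\hat\beta(\bm x_j^{tr})$. First I would use the exact identity $\nu=\mathbb E_{\bm x\sim P_{tr}}[\beta(\bm x)(g(\bm x)-\hat g(\bm x))]+\mathbb E_{\bm x\sim P_{te}}[\hat g(\bm x)]$, which follows from Assumptions~\ref{as1}--\ref{as2} by the change of measure $\mathbb E_{P_{tr}}[\beta(g-\hat g)]=\mathbb E_{P_{te}}[g-\hat g]$ applied to the fixed function $g-\hat g$ (the $\hat g$ terms cancel, so $\nu$ stays deterministic). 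This gives $V_R(\rho)-\nu=A+B$, where $A=\frac1{n_1}\sum_{j=1}^{n_1}\hat\beta_j(y_j^{tr}-\hat g(\bm x_j^{tr}))-\mathbb E_{P_{tr}}[\beta(g-\hat g)]$ and $B=\frac1{n_{te}}\sum_{i=1}^{n_{te}}\hat g(\bm x_i^{te})-\mathbb E_{P_{te}}[\hat g]$.

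All subsequent bounds are conditional on the nonparametric-regression split, so that $\hat g$ is a fixed function independent of both $\{\bm X^{tr}_{KMM},\bm Y^{tr}_{KMM}\}$ and $\bm X^{te}$. Then $B$ is a centered average of i.i.d.\ variables bounded by $\|\hat g\|_\infty\le R\|\hat g\|_{\mathcal H}$, so Hoeffding gives $|B|=\mathcal O(\|\hat g\|_{\mathcal H}\,n_{te}^{-1/2})$. For $A$, I would peel off the label noise $\mathcal E_j=y_j^{tr}-g(\bm x_j^{tr})$: the piece $\frac1{n_1}\sum_j\hat\beta_j\mathcal E_j$ is a weighted average of mean-zero noise whose weights depend only on covariates, hence $\mathcal O(B\sigma\,n_1^{-1/2})$, and the remainder is the KMM-on-residual term $\frac1{n_1}\sum_j\hat\beta_j r(\bm x_j^{tr})-\mathbb E_{P_{te}}[r]$ with residual $r=g-\hat g$.

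The key step splits $r$ through the best approximant $f_F\in\mathcal H$ with $\|f_F\|_{\mathcal H}\le F$ and $\|g-f_F\|_\infty\le\mathcal A_\infty(g,F)$, writing $r=(f_F-\hat g)+(g-f_F)$. For the RKHS piece $h\triangleq f_F-\hat g$, the reproducing property and Cauchy--Schwarz give the mean-matching inequality $|\frac1{n_1}\sum_j\hat\beta_j h(\bm x_j^{tr})-\frac1{n_{te}}\sum_i h(\bm x_i^{te})|\le\|h\|_{\mathcal H}\sqrt{\hat L(\hat\beta)}$, with $\|h\|_{\mathcal H}\le F+\|\hat g\|_{\mathcal H}$ and with the KMM objective bound $\sqrt{\hat L(\hat\beta)}=\mathcal O(n_{tr}^{-1/2}+n_{te}^{-1/2})$ inherited from the analysis of Theorem~\ref{est1}; the test average of $h$ concentrates to $\mathbb E_{P_{te}}[h]$ at rate $\|h\|_\infty n_{te}^{-1/2}$. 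For the approximation piece, $0\le\hat\beta_j\le B$ and $\frac1{n_1}\sum_j\hat\beta_j\le B$ give $|\frac1{n_1}\sum_j\hat\beta_j(g-f_F)(\bm x_j^{tr})-\mathbb E_{P_{te}}[g-f_F]|\le(B+1)\mathcal A_\infty(g,F)$, where I crucially use the $\|\cdot\|_\infty$ (rather than $\mathscr L^2$) norm in the definition of $\mathcal A_\infty$ so no empirical-to-population passage is needed on this term.

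Collecting the pieces and using $\mathcal A_\infty(g,F)\le C(\log F)^{-s}$, I obtain $|V_R(\rho)-\nu|=\mathcal O(F(n_{tr}^{-1/2}+n_{te}^{-1/2})+(\log F)^{-s})$. Writing $N\triangleq n_{tr}n_{te}/(n_{tr}+n_{te})$, so that $n_{tr}^{-1/2}+n_{te}^{-1/2}=\Theta(N^{-1/2})$, I would make the purely analytical choice $F=N^{1/4}$: the estimation term becomes $\mathcal O(N^{-1/4})$, while the approximation term is $\mathcal O((\tfrac14\log N)^{-s})=\mathcal O((\log N)^{-s})$, so the logarithmic term dominates and $|V_R(\rho)-\nu|=\mathcal O((\log N)^{-s})$, which is the stated rate. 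Since $F$ enters only through the analysis and the $\gamma$-dependent quantities $\|\hat g\|_{\mathcal H}$ and $\|\hat g\|_\infty$ appear only in polynomially-vanishing terms, every fixed $\gamma>0$ yields the same rate. The main obstacle is bookkeeping rather than conceptual: making the conditioning arguments rigorous so that the data-split independence legitimately lets me treat $\hat g$ as a fixed function in the matching and Hoeffding bounds, and verifying that the $\gamma$-dependence stays confined to the $\mathcal O(N^{-1/4})$ terms so that the $(\log F)^{-s}$ contribution is genuinely the bottleneck.
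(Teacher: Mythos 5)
Your argument is correct, but it takes a different route from the paper. The paper's proof is a two-line reduction: it reuses the decomposition from Proposition \ref{ideal}, writing $V_R(\rho)-\nu$ as $(V_{KMM}(\rho)-\nu)$ plus the inner product of $\hat g$ with the empirical mean discrepancy, bounds the latter by $\|\hat g\|_{\mathcal H}$ times the KMM objective value (which is $\mathcal O(n_{tr}^{-1/2}+n_{te}^{-1/2})$, with $\|\hat g\|_{\mathcal H}=\mathcal O(1)$ for fixed $\gamma$), and then simply cites Theorem 3 of Yu and Szepesv\'ari (2012) for the $(\log \frac{n_{tr}n_{te}}{n_{tr}+n_{te}})^{-s}$ rate of $V_{KMM}$ under the $\mathcal A_\infty$ assumption. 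You instead re-derive that logarithmic rate from scratch: you center around the population identity $\nu=\mathbb E_{P_{tr}}[\beta(g-\hat g)]+\mathbb E_{P_{te}}[\hat g]$, peel off the label noise, split the residual $g-\hat g$ through the best $F$-approximant $f_F$, handle the RKHS part via the mean-matching inequality and the non-RKHS part via the sup-norm bound $(B+1)\mathcal A_\infty(g,F)$, and balance with $F=N^{1/4}$ so that $(\log F)^{-s}$ dominates. This is essentially the internal structure of the cited KMM theorem transplanted onto the residual, and it is sound (your reading of $\mathcal A_\infty$ as a sup-norm approximation error matches the source, the data-split conditioning legitimately makes $\hat g$ a fixed function, and $\|\hat g\|_{\mathcal H}\le\gamma^{-1/2}=\mathcal O(1)$ confines the $\gamma$-dependence to polynomial terms as you claim). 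What your version buys is self-containment and an explicit explanation of why any fixed $\gamma>0$ suffices; what the paper's version buys is brevity, at the cost of leaning entirely on the external theorem.
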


	\section{Empirical Risk Minimization}
	The robust estimator can handle empirical risk minimization (ERM). Given loss function $l'(x,y;\theta):\mathcal X\times \mathbb R \rightarrow \mathbb R$
	given $\theta$ in $\mathcal D$, we optimize over $$\min_{\theta\in\mathcal D} \mathbb E[l'(X^{te},Y^{te};\theta)]=\min_{\theta\in\mathcal D} \mathbb E_{\vx\sim P_{te}}[l(\vx;\theta)],$$
	where $l(\vx;\theta)\triangleq \mathbb E_{Y|\vx} [l'(\vx,Y;\theta)]$ to find $$\theta^\star\triangleq \operatorname*{argmin}_{\theta\in\mathcal D} \mathbb E_{\vx\sim P_{te}}[l(X^{te};\theta)].$$ In practice, usually a regularization term $\Omega[\theta]$ on $\theta$ is added. For example, the KMM in \cite{huang2007correcting} considers
	\begin{equation}\label{ermkmm}
	\min_{\theta\in\mathcal D }   \frac{1}{n_{tr}}\sum_{j=1}^{n_{tr}}\hat\beta (\vx^{tr}_j) l'(\vx_j^{tr}, y_j^{tr};\theta)+\lambda \Omega[\theta].
	\end{equation}
	We can carry out a similar modification for $V_R$:
	\begin{align}\label{ermcvkmm}
	\min_{\theta\in\mathcal D } \frac{1}{\lfloor\rho n_{tr}\rfloor}&\sum_{j=1}^{\lfloor\rho n_{tr}\rfloor}\hat\beta(\vx_j^{tr})(l'(\vx_j^{tr}, y_j^{tr};\theta)-\hat l(\vx_j^{tr};\theta))\nonumber\\
	&+\frac{1}{n_{te}} \sum_{i=1}^{n_{te}} \hat l(\bm{x}_i^{te};\theta)+\lambda\Omega[\theta],
	\end{align}
	with $\hat{\bm{\beta}}$ based on $\{\bm X^{tr}_{KMM}, \bm X^{te}\}$ and $\hat l(x;\theta)$ being an estimate of $l(x;\theta)$ based on $\{\bm X^{tr}_{NR}, \bm Y^{tr}_{NR}\}$. For later reference, we note that a similar modification can also be used on $V_{NR}$:
	%       approach, and
	\begin{equation}\label{ermnr}
	\min_{\theta\in\mathcal D } \frac{1}{n_{te}} \sum_{i=1}^{n_{te}} \hat l(\bm{x}_i^{te};\theta)+\lambda\Omega[\theta].
	\end{equation}
	% 	  respectively. 
	We discuss two classical learning problems by (\ref{ermcvkmm}).
	% since (\ref{ermnr}) follows easily and (\ref{ermkmm}) have been discussed in \cite{huang2007correcting,gretton2009covariate}.
	
	\paragraph{Penalized Least Square Regression: }
	Consider a regression problem with $l'(\vx,y;\theta)=(y-\langle\theta, \Phi(\vx)\rangle_{\mathcal H})^2$, $\Omega[\theta]=\|\theta\|^2_{\mathcal H}$ and $y\in [0,1]$. We have $$l(\vx; \theta)=\mathbb E[Y^2|\vx]-2g(\vx)\langle\theta, \Phi(\vx)\rangle_{\mathcal H}+\langle\theta, \Phi(\vx)\rangle_{\mathcal H}^2,$$
	and a candidate for $\hat l(\vx, \theta)$ is to substitute $g$ with $\hat g_{\gamma, data}$. Then, (\ref{ermcvkmm}) becomes
	\begin{align*}
	\min_{\theta\in\mathcal D} &\sum_{j=1}^{\lfloor \rho n_{tr}\rfloor} -\frac{2\beta(\vx^{tr}_j)}{\lfloor \rho n_{tr}\rfloor}(y_j^{tr}-\hat{g}(\vx_j^{tr}) )\langle\theta, \Phi(\vx_j^{tr})\rangle_{\mathcal H}\nonumber\\
	&+\frac{1}{n_{te}} \sum_{i=1}^{n_{te}}(\hat g(\vx_i^{te})-\langle\theta, \Phi(\vx)\rangle_{\mathcal H})^2 +\lambda\|\theta\|^2_\mathcal H,
	\end{align*}
	by adding and removing the components not involving $\theta$. Furthermore, it simplifies to the QP: 
	\begin{align}\label{ermcvkmm_lms}
	\min_{\bm{\alpha}\in\mathbb R^{\lfloor \rho n_{tr}\rfloor+n_{te}}} &\frac{-2\vw^T_1\mK_{tot}\bm{\alpha}}{\lfloor \rho n_{tr}\rfloor}+\lambda \bm{\alpha}^T\mK_{tot} \bm{\alpha}\nonumber\\&+\frac{(\vw_2-\mK_{tot}\bm{\alpha})^T\mW_3(\vw_2-\mK_{tot}\bm{\alpha})}{n_{te}},
	\end{align}
	by the representation theorem \cite{scholkopf2001generalized}. Here $(\mK_{tot})_{ij}=K(\vx^{tot}_i, \vx^{tot}_j)$ and $\mW_3=\text{diag}(\vw_3)$ where $\vx_{i}^{tot}=\vx_{i}^{tr}$, $(w_1)_i=\beta(\vx^{tr}_i)(y_i^{tr}-\hat{g}(\vx_i^{tr}) )$, $(w_2)_i=0$, $(w_3)_i=0$ for $1 \leq i \leq \textstyle{\lfloor \rho n_{tr}\rfloor}$ and $\vx_{i}^{tot}=\vx_{i-{\lfloor \rho n_{tr}\rfloor}}^{te}$, $(w_1)_i=0$, $(w_2)_i=\hat g (\vx_{i-\lfloor \rho n_{tr}\rfloor}^{te})$, $(w_3)_i=1$ for ${\lfloor \rho n_{tr}\rfloor}+1 \leq i \leq {\lfloor \rho n_{tr}\rfloor}+n_{te}$.
	Notice (\ref{ermcvkmm_lms}) has a closed-form solution $$\hat{\bm{\alpha}}=(\mW_3\mK_{tot}+\lambda n_{te}\mI)^{-1}(\frac{n_{te}}{\lfloor \rho n_{tr}\rfloor}\vw_1+\vw_2).$$
	
	\paragraph{Penalized Logistic Regression:} Consider a binary classification problem with $y\in \{0,1\}$, $\Omega[\theta]=\|\theta\|^2_{\mathcal H}$ and $-l'(\vx,y;\theta)=y\log (\frac{1}{1+\exp{\langle\theta, \Phi(\vx)\rangle_{\mathcal H}}})+(1-y)\log (\frac{\exp{\langle\theta, \Phi(\vx)\rangle_{\mathcal H}}}{1+\exp{\langle\theta, \Phi(\vx)\rangle_{\mathcal H}}})$.
	Thus, we have $$-l(\vx;\theta)=-g(\vx)\langle\theta, \Phi(\vx)\rangle_{\mathcal H}+\log (\frac{\exp{\langle\theta, \Phi(\vx)\rangle_{\mathcal H}}}{1+\exp{\langle\theta, \Phi(\vx)\rangle_{\mathcal H}}}),$$
	and we can again substitute $g$ with $\hat g_{\gamma, data}$. Then, (\ref{ermcvkmm}) becomes
	\begin{align*}
	&\min_{\theta\in\mathcal D } \sum_{j=1}^{\lfloor \rho n_{tr}\rfloor} \frac{\beta(\vx^{tr}_j)}{\lfloor \rho n_{tr}\rfloor}(y_j^{tr}-\hat{g}(\vx_j^{tr}) )\langle\theta, \Phi(\vx_j^{tr})\rangle_{\mathcal H}\nonumber\\
	&+\frac{1}{n_{te}} \sum_{i=1}^{n_{te}}-\hat g(\vx_i^{te})\langle\theta, \Phi(\vx_i^{te})\rangle_{\mathcal H} +\lambda\|\theta\|^2_\mathcal H \nonumber\\
	&+\log (\frac{\exp{\langle\theta, \Phi(\vx_i^{te})\rangle_{\mathcal H}}}{1+\exp{\langle\theta, \Phi(\vx_i^{te})\rangle_{\mathcal H}}}).
	\end{align*}
	which again simplifies to, by \cite{scholkopf2001generalized}, the convex program:
	\begin{align}\label{ermcvkmm_log}
	\min_{\bm{\alpha}\in\mathbb R^{\lfloor \rho n_{tr}\rfloor+n_{te}}} &\frac{\vw^T_1\mK_{tot}\bm{\alpha}}{\lfloor \rho n_{tr}\rfloor}-\frac{\vw^T_2\mK_{tot}\bm{\alpha}}{n_{te}} +\lambda \bm{\alpha}^T\mK_{tot} \bm{\alpha} \nonumber\\
	&+\frac{\sum_{i=1}^{n_{te}}\log(\frac{\exp{(\mK_{tot}\bm{\alpha})_{\lfloor \rho n_{tr}\rfloor+i}}}{1+\exp{(\mK_{tot}\bm{\alpha})_{\lfloor \rho n_{tr}\rfloor+i}}})}{n_{te}}.
	\end{align}
	Both (\ref{ermcvkmm_lms}) and (\ref{ermcvkmm_log}) can be optimized efficiently by standard solvers. Notably, derived from (\ref{ermcvkmm}), an optimal solution is in the form $\hat \theta=\sum_{i=1} \hat\alpha_i K(\bm{x}_i^{tot}, \bm{x} )$ which spans on both training and test data. In contrast, the solution of (\ref{ermkmm}) or (\ref{ermnr}) only spans on one of them. For example, as shown in \cite{huang2007correcting}, the penalized least square solution for (\ref{ermkmm}) is $\hat \theta=\sum_{i=1} \hat\alpha_i K(\bm{x}_i^{tr}, \bm{x} )$ where $$\hat{\bm{\alpha}}=(\mK+ n_{te}\lambda \text{ diag}(\hat{\bm{\beta}})^{-1})^{-1} \vy^{tr}$$ (we use $\hat{\bm{\alpha}}=(\text{ diag}(\hat{\bm{\beta}})\mK+ n_{te}\lambda \mI)^{-1} \text{ diag}(\hat{\bm{\beta}})\vy^{tr}$ in experiments to avoid invertibility issues caused by the sparsity of $\hat{\bm{\beta}}$), so only the training data are in the span of the feature space that constitutes $\hat\theta$. The aggregation of both sets suggests a more effective utilization of data .
	We conclude with a theorem on ERM similar to Corollary 8.9 in \cite{gretton2009covariate}, which guarantees the convergence of the solution of (\ref{ermcvkmm}) in a simple setting.
	\begin{theorem}\label{erm}
		Assume $l(x;\theta)$ and $ \hat l(x;\theta) \in\mathcal  H$ can be expressed as $\langle\Phi(x), \theta\rangle_{\mathcal H}+f(x;\theta)$ with $||\theta||_{\mathcal H} \leq C$ and $l'(x, y ;\theta) \in\mathcal  H$  as $\langle\Upsilon(x,y), \Lambda\rangle_{\mathcal H}+f(x;\theta)$ with $||\Lambda||_{\mathcal H} \leq C$. Denote this class of loss functions $\mathcal G$ and further assume $l(x;\theta)$ are continuous, bounded by $D$ and $L$-Lipschitz on $\theta$ uniformly over $x$ for $(\theta,x)$ in a compact set $\mathcal D\times\mathcal X$. Then, the ERM with
		 \begin{align*}
		 V_R(\theta) \triangleq& \frac{1}{\lfloor\rho n_{tr}\rfloor}\sum_{j=1}^{\lfloor\rho n_{tr}\rfloor}\hat\beta(\vx_j^{tr})(l'(\vx_j^{tr}, y_j^{tr};\theta)-\hat l(\vx_j^{tr};\theta))\nonumber\\&+\frac{1}{n_{te}} \sum_{i=1}^{n_{te}} \hat l(\bm{x}_i^{te};\theta)
		 \end{align*} 
		 and $\hat\theta_{R}\triangleq \operatorname*{argmin}_{\theta\in\mathcal D} V_R(\theta)$ satisfies
		\begin{align*}
		\mathbb E[l'(X_{te},Y_{te};\hat\theta_R)]
		\leq \mathbb E[l'(X_{te},Y_{te};\theta^\star)]+\mathcal O( n_{tr}^{-\frac{1}{2}}+n_{te}^{-\frac{1}{2}}).
		\end{align*}
	\end{theorem}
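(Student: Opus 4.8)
Write $\nu(\theta)\triangleq\mathbb{E}_{\bm{x}\sim P_{te}}[l(\bm{x};\theta)]=\mathbb{E}[l'(X_{te},Y_{te};\theta)]$ and $m\triangleq\lfloor\rho n_{tr}\rfloor$, so the claim reads $\nu(\hat\theta_R)\le\nu(\theta^\star)+\mathcal{O}(n_{tr}^{-1/2}+n_{te}^{-1/2})$. The plan is the standard two-step ERM recipe. First I would reduce to a uniform-deviation bound: since $\hat\theta_R$ minimizes $V_R(\cdot)$ we have $V_R(\hat\theta_R)-V_R(\theta^\star)\le 0$, and telescoping gives
\begin{align*}
\nu(\hat\theta_R)-\nu(\theta^\star)&=\big[\nu(\hat\theta_R)-V_R(\hat\theta_R)\big]+\big[V_R(\hat\theta_R)-V_R(\theta^\star)\big]+\big[V_R(\theta^\star)-\nu(\theta^\star)\big]\\
&\le 2\sup_{\theta\in\mathcal D}\,|V_R(\theta)-\nu(\theta)|.
\end{align*}
Everything then reduces to proving $\sup_{\theta\in\mathcal D}|V_R(\theta)-\nu(\theta)|=\mathcal{O}(n_{tr}^{-1/2}+n_{te}^{-1/2})$, after which the identity $\mathbb{E}[l'(X_{te},Y_{te};\theta)]=\nu(\theta)$ turns this into the stated excess-risk bound.

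For the uniform deviation I would reuse the exact control-variate identity underlying $V_R$, namely $\nu(\theta)=\mathbb{E}_{tr}[\beta(\bm{x})(l(\bm{x};\theta)-\hat l(\bm{x};\theta))]+\mathbb{E}_{te}[\hat l(\bm{x};\theta)]$, valid for any $\hat l$ because $\mathbb{E}_{tr}[\beta l]=\mathbb{E}_{te}[l]=\nu$ and $\mathbb{E}_{tr}[\beta\hat l]=\mathbb{E}_{te}[\hat l]$. Subtracting it from $V_R(\theta)$ decomposes the deviation (writing $\beta_j\triangleq\beta(\bm{x}_j^{tr})$) into four pieces: a conditionally-centered noise term $\tfrac1m\sum_j\hat\beta_j(l'(\bm{x}_j^{tr},y_j^{tr};\theta)-l(\bm{x}_j^{tr};\theta))$; a weight-estimation term $\tfrac1m\sum_j(\hat\beta_j-\beta_j)(l(\bm{x}_j^{tr};\theta)-\hat l(\bm{x}_j^{tr};\theta))$; a centered training average $\tfrac1m\sum_j\beta_j(l-\hat l)-\mathbb{E}_{tr}[\beta(l-\hat l)]$; and a centered test average $\tfrac1{n_{te}}\sum_i\hat l(\bm{x}_i^{te};\theta)-\mathbb{E}_{te}[\hat l]$. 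The structure of $\mathcal G$ is exactly what makes the weight term tractable: since $l$, $l'$ and $\hat l$ share the same $y$-independent part $f(\bm{x};\theta)$, it cancels in the residual, leaving $l(\bm{x};\theta)-\hat l(\bm{x};\theta)=\langle\Phi(\bm{x}),\theta-\tilde\theta\rangle_{\mathcal H}$ with $\tilde\theta$ the RKHS coefficient of $\hat l$ and $\|\theta-\tilde\theta\|_{\mathcal H}\le 2C$. Cauchy--Schwarz then bounds the weight term by $2C\,\|\tfrac1m\sum_j(\hat\beta_j-\beta_j)\Phi(\bm{x}_j^{tr})\|_{\mathcal H}$ uniformly in $\theta$, and this is precisely the KMM mean-matching discrepancy, which the analysis behind Proposition \ref{ideal} (and the KMM bound of \cite{gretton2009covariate}) controls at $\mathcal{O}(n_{tr}^{-1/2}+n_{te}^{-1/2})$.

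The three remaining terms are empirical processes with bounded, $L$-Lipschitz-in-$\theta$ summands, so I would bound each by a uniform law of large numbers over $\{\theta:\|\theta\|_{\mathcal H}\le C\}$: the linear-in-$\Phi$ part has Rademacher complexity $\mathcal{O}(CR/\sqrt{n})$ by $\|\Phi\|_{\mathcal H}\le R$, while the $L$-Lipschitz remainder is absorbed by a finite covering of the compact $\mathcal D$ whose union bound costs only logarithmic factors. Sample splitting is what keeps these honest: $\hat\beta_j$ uses no labels, so conditioning on the covariates makes the noise term conditionally mean zero, while conditioning on the NR split makes $\hat l$ a fixed function independent of the training and test points at which the last two averages are formed, so each is a genuine centered empirical process at scale $n_{tr}^{-1/2}$ or $n_{te}^{-1/2}$. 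Summing the four contributions yields the target, hence the theorem.

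The hard part will be the uniform-in-$\theta$ control of these three terms in the presence of the data-dependent estimates $\hat\beta$ and $\hat l$. One must condition on the right portion of the data — the KMM and test covariates for $\hat\beta$, the NR split for $\hat l$ — so that the remaining sample is i.i.d.\ and independent of the estimate, and then verify that the induced class $\{\bm{x}\mapsto\hat l(\bm{x};\theta):\theta\in\mathcal D\}$ has Rademacher/covering complexity bounded \emph{uniformly over the realization} of $\hat l$; this is exactly where the ball bound $\|\tilde\theta\|_{\mathcal H}\le C$, the uniform bound $D$, and the uniform Lipschitz constant $L$ are used. By contrast, the reduction of the weight term to the KMM feature-matching discrepancy is clean once the shared $f(\bm{x};\theta)$ cancels, and it is this reduction that transports the $g\in\mathcal H$ rate of Proposition \ref{ideal} from pure mean estimation to the full ERM guarantee.
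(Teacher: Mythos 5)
Your proposal is correct, and the skeleton (reduce the excess risk to $2\sup_{\theta\in\mathcal D}|V_R(\theta)-\mathbb E[l'(X_{te},Y_{te};\theta)]|$ via optimality of $\hat\theta_R$, then show the uniform deviation is $\mathcal O(n_{tr}^{-1/2}+n_{te}^{-1/2})$) is exactly the paper's first step. Where you diverge is in the decomposition of the uniform deviation. The paper never introduces the true RND $\beta$ in the ERM proof: it splits the deviation into three sup terms --- the reweighted training loss $\frac{1}{\lfloor\rho n_{tr}\rfloor}\sum_j\hat\beta_j l'_j$ versus the empirical test loss $\frac{1}{n_{te}}\sum_i l(\vx_i^{te};\theta)$ (bounded by citing Corollary 8.9 of \cite{gretton2009covariate}); the mean-matching discrepancy term $|\frac{1}{\lfloor\rho n_{tr}\rfloor}\sum_j\hat\beta_j\hat l_j-\frac{1}{n_{te}}\sum_i\hat l_i|$ (bounded exactly as in your weight term, via the bounded linear RKHS part of $\hat l$ and $\hat L(\hat{\bm\beta})$); and the fluctuation of the empirical test loss around $\mathbb E_{te}[l]$, which it controls by showing $\mathcal G$ is $P_{te}$-Donsker (this is where the Lipschitz and compactness assumptions enter) and applying the continuous-mapping theorem to $\|\cdot\|_\infty$ of the limiting Gaussian process. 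Your four-term decomposition instead mirrors the proof of Theorem 1, inserting $\beta$ as an intermediary and replacing the Donsker argument with Rademacher-complexity and covering-number bounds for the centered empirical processes. Both routes hinge on the same key reduction --- the residual $l-\hat l$ (or $\hat l$ itself) has a linear RKHS component in a ball of radius $2C$, so the reweighting error collapses to the KMM feature-matching discrepancy --- and both deliver the same rate. Your version is more self-contained and makes explicit where sample splitting is used, at the cost of having to carry out the uniform laws of large numbers over classes involving the data-dependent $\hat\beta$ and $\hat l$ (which you correctly identify as the delicate step); the paper's version outsources the heaviest term to the cited corollary and gets the test-set fluctuation from general empirical-process theory.
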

	\section{Experiments}
	\subsection{Toy Dataset Regression}
	We first present a toy example to provide comparison with KMM. The data is generated as the polynomial regression example in \cite{shimodaira2000improving, huang2007correcting}, where $P_{tr}\sim\mathcal N(0.5,0.5^2)$, $P_{te}\sim\mathcal N(0, 0.3^2)$ are Gaussian distributions. The labels are generated according to $y=-x+x^3$ and observed with Gaussian noise $\mathcal N(0,0.3^2)$. We sample 500 points in both training and test data and fit a linear model using ordinary least square (OLS), KMM and our robust estimator, respectively. On the population level, the best linear fit is $y=-0.73 x$ (i.e. $\argmin_{\alpha_0,\beta_0} \mathbb E_{x\sim P_{te}}(Y- (\alpha_0 x+\beta_0))^2$ is $\alpha_0=-0.73, \beta_0=0$). For simplicity, we set the intercept $\beta_0=0$ as known and compare the fitted slopes for different estimators. We use a degree-3 polynomial kernel and set $\gamma$ in $\hat g_{\gamma,data}$ to the default value $n_{tr}^{-1}$. The tolerance $\epsilon$ for $\hat{\bm{\beta}}$ is set similarly as in \cite{huang2007correcting} with a slight tuning to avoid an overly sparse solution. The slope is fitted without regularization. In Figure 1(a), the red curve is the true polynomial regression function and the purple line is the best linear fit. The blue circle is the training data and the orange cross is the test data. For three different approaches, as well as an additional density-ratio-based method in \cite{shimodaira2000improving}, the fitted slope over 20 trials are summarized in Figure 1(b). The average value is plotted in Figure 1(a) with black (KMM), green (robust) and yellow (OLS) respectively. As we see, the robust estimator outperforms the two other methods, achieving higher accuracy than KMM and unweighted OLS and recovering the slope closest to the best one in the vast majority of trials.
	\begin{figure}[h]
		\centerline
		{\includegraphics[width=8cm]{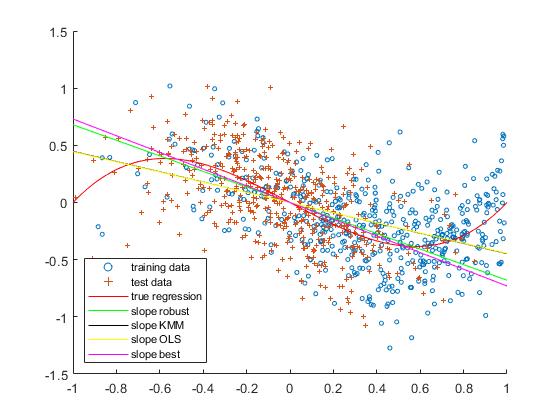} }
		\quad \text{(a)}
		
		\centerline
		{\includegraphics[height=6cm,width=8cm]{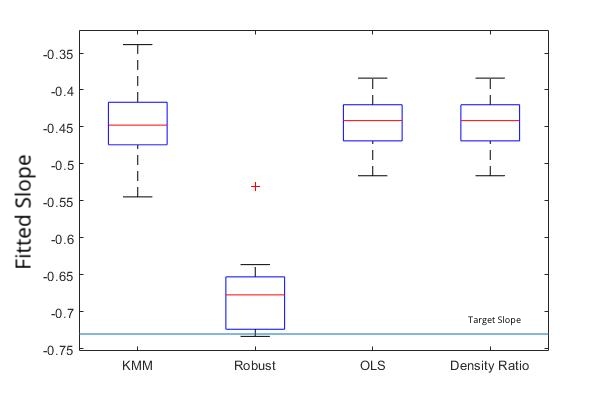} }
		\quad \text{(b)}
		\caption{ (a): Linear fit with OLS, KMM and robust estimator; (b): Boxplot on slope estimation }
	\end{figure}
\begin{figure}[h]
	\centerline
	{\includegraphics[height=4cm,width=9cm]{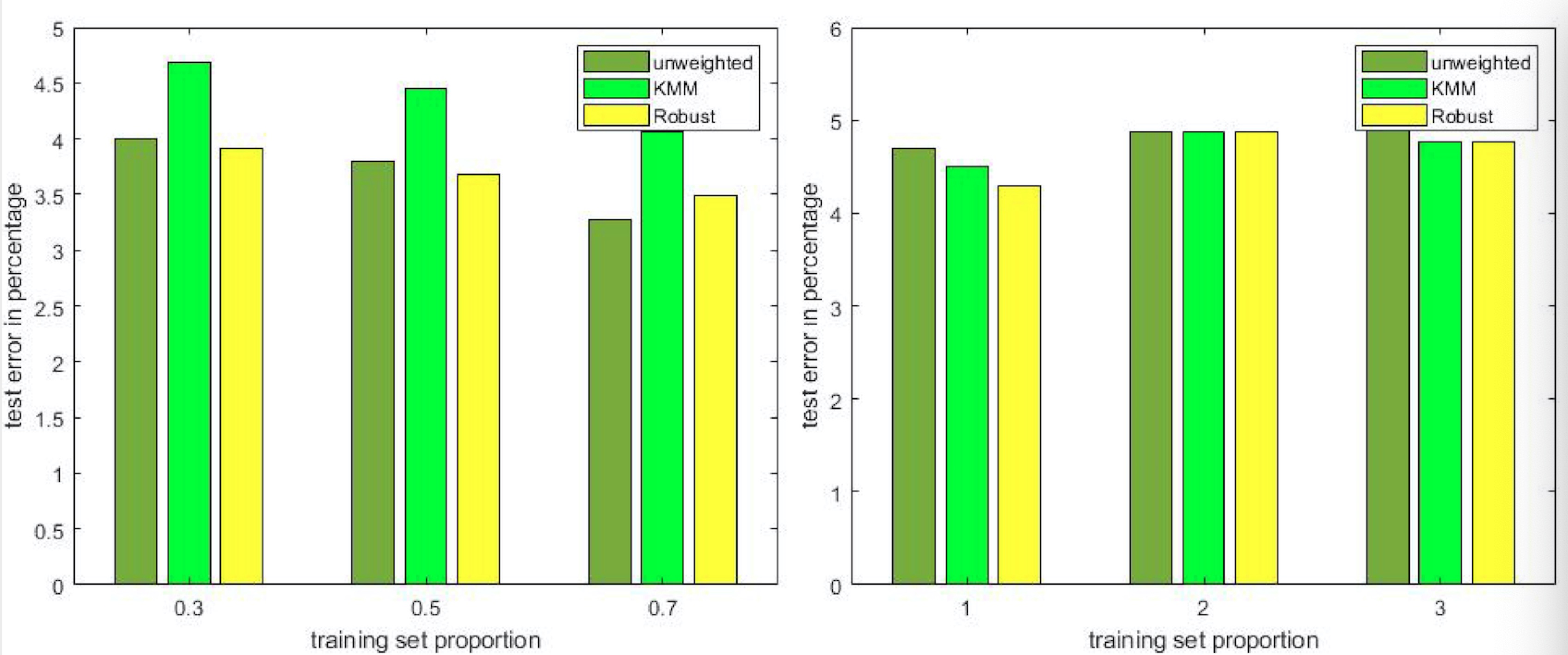} }
	\qquad\qquad\quad \text{(a)} \qquad\qquad\quad\qquad\qquad\quad\text{(b)}
	\caption{ Classification performance for (a): penalized least square regression; (b) penalized logistic regression}
\end{figure}
	
	\subsection{Real World Dataset for ERM}
	Next, we test our approach in ERM on a real world dataset, the breast cancer dataset from the UCI Archive. We consider the second biased sampling scheme in \cite{huang2007correcting} where the sampling bias operates jointly across multiple features. In particular, after randomly splitting the training and test sets based on different proportions, the training set is further subsampled with probability of selecting $\vx_i$ in the training set proportional to $\exp(-\sigma_1 \|\vx_i-\bar{\vx}\|)$ for some $\sigma_1>0$ and the training sample mean $\bar{\vx}$. Since this is a binary classification problem and we are interested in comparing different approaches, we experiment with both the penalized least square regression and the penalized logistic regression for training sets of several sizes, i.e., the proportions of the training data are 0.3, 0.5, and 0.7 respectively, with respect to the total data. We used a Gaussian kernel $\exp (-\sigma_2\|\vx_i-\vx_j\|)$ for some $\sigma_2>0$. The tolerance $\epsilon$ for $\hat{\bm{\beta}}$ is set exactly as in \cite{huang2007correcting}. For both experiments, we choose parameters $\gamma=n_{tr}^{-1}$ as default, $\lambda=5$ by cross-validation and $\sigma_1=-1/100$, $\sigma_2=\sqrt{0.5}$. Finally, we used the fitted parameters (i.e., optimal solution $\hat\theta$ in ERM) to predict the labels on the test set and compare with the hidden real ones. The summary of test error comparison is shown in Figure 2 where we use the term \textit{unweighted} to denote the case for (\ref{ermnr}), \textit{KMM} for (\ref{ermkmm}) and \textit{Robust} for (\ref{ermcvkmm}).
	The robust estimator gives the lowest test error in 5 cases out of 6 and follows KMM closely in the exceptional case, confirming our finding on its improvement over the traditional methods.

	\subsection {Simulated Dataset for Estimation} 
	To test the performace of robust estimator on an estimation problem, we simulate data from two ten-dimensional Gaussian distributions with different, randomly generated means and covariance matrices as training and test sets. The target value is $\nu=\mathbb E_{\vx\sim P_{te}}[g(\vx)]$ for an artificially constructed regression function $g(x)=\sin(c_1 \|\vx\|_2^2)+({1+\exp(\vc_2^T\vx)})^{-1}$ with random $c_1, \vc_2$ and labels are observed with Gaussian noise. The Gaussian kernel $\exp (-\sigma\|\vx_i-\vx_j\|)$ for $\sigma>0$ and a tolerance $\epsilon$ for $\hat{\bm{\beta}}$ are set with exactly the same parameters as in \cite{gretton2009covariate} with $\sigma=\sqrt{5}$, $B=1000$ and $\epsilon=\frac{\sqrt{n_{tr}}-1}{\sqrt{n_{tr}}}$. We also experiment with a different $\hat g$ by substituting $\hat g_{\gamma, data}$ for a naive linear OLS fit with a lasso regularization term $\lambda>0$. At each iteration, we use the sample mean from $10^6$ data points (without adding noise) as the true mean and calculate the average MSE over 100 estimations for $V_R$, $V_{KMM}$ and $V_{NR}$ respectively. As shown in Table 1, the performances of $V_R$ are again consistently on par with the best case scenarios, even when the form of $\hat g_{\gamma,data}$ is replaced with a naive OLS fit, suggesting the robust estimator still works well under other forms of control variate functions. Moreover, we see that the robust estimator exhibits satisfactory performance even when the usual assumption $n_{tr}<n_{te}$ is violated.
	
	\begin{table}[htb]
		\begin{center}
			\caption{Average MSE for Estimation}
			\begin{tabular}{cccc} % <-- Alignments: 1st column left, 2nd middle and 3rd right, with vertical lines in between
				\textbf{Hyperparameters} &  &\textbf{MSE}  &\\
				($\lambda, n_{tr},n_{te}$) &  $V_{NR}$ &   $V_{KMM}$ & $V_{R}$\\ 
				\hline
					\\ [-1em]
				$(0.1, 50, 500)$ & 0.9970 & 0.9489 & \textbf{0.9134} \\
				\\ [-1em]
				
				\\ [-1em]
				$(0.1, 500, 500)$  & 1.0006 & \textbf{0.9294} & 0.9340 \\
				\\ [-1em]
				
				\\ [-1em]
				$(0.1, 500, 50)$ & 1.0021 & 0.9245 & \textbf{0.9242} \\
				\\ [-1em]
				
				\\ [-1em]
				$(10, 50, 500)$ & 0.9962 & 0.9493 & \textbf{0.9467} \\
				\\ [-1em]
				
				\\ [-1em]
				$(10, 500, 500)$  & 0.9964 & 0.9294 & \textbf{0.9288} \\

				\\ [-1em]
				$(10, 500, 50)$ & 0.9965 & \textbf{0.9245} & 0.9293 \\
				
				\\ [-1em]
				
			\end{tabular}
		\end{center}
	\end{table}

			\section{Conclusion}
			Motivated from variance and bias reduction, we introduced a new robust estimator for covariate shift problems which leads to improved accuracy over both KMM and NR in different settings. From a practical standpoint, the control variates and data aggregation enable the estimation/training process to be more stable and data-efficient at no expense of significant computational complexity increase. From an analytical standpoint, when the regression function lies in range spaces outside of RKHS, a promising progress is made to improve upon the well-known rate gap of KMM towards the parametric. For future work, note the canonical rate is still not achieved and it remains unclear the suitable tools for further improvement, if possible at all. Moreover, outside the KMM context with the regularized empirical regression function in RKHS, establishing the eligibility and effectiveness of other reweighting method coupled with different regression functions from learning schemes requires rigorous analysis.

			\newpage
			\textbf{Acknowledgements}\\
			
			We gratefully acknowledge support from the National Science Foundation under grants IIS-1849280 and CMMI-1653339/1834710.
			\bibliography{Total_1}
			\bibliographystyle{apalike}
			
			\newpage
			\onecolumn
			
				\setcounter{section}{6}
			\setcounter{coro}{1}
			\setcounter{prop}{2}
			\section{Appendix}
			Throughout the proofs, 
			$h(\cdot) \in \mathcal H$ is assumed to be an unspecified function in the RKHS. Also,
			we use $\mathbb E_X[\cdot]$ to denote expectation over the randomness of $X$ while fixing others and $\mathbb E_{|X}[\cdot] $ as the conditional expectation $\mathbb E[\cdot|X]$. Moreover we remark that
			all results involving $\hat g_{\gamma,data}$ can be interpreted either as a high probability bound or a bound on expectation over $\mathbb E_{data}$ (i.e., if we train $\hat g_{\gamma,\bm{X}^{tr}_{NR},\bm{Y}^{tr}_{NR}}$ using ${\bm{X}^{tr}_{NR},\bm{Y}^{tr}_{NR}}$, then $\mathbb E_{data}$ means $\mathbb E_{\bm{X}^{tr}_{NR},\bm{Y}^{tr}_{NR}}$ ). The same interpretation applies for the results with Big-$\mathcal O$ notations.  Finally, constants $C_2,C'_2$, $C_3$, $C'_3$ and $C''_3$ as well as similar constants introduced later which depend on $R, g(\cdot)$ or $\delta$ (for $1-\delta$ high probability bound) will sometimes be denoted by a common $C$ during the proofs for ease of presentation.
			
			\subsection{Preliminaries}
			\begin{lemma}\label{suptoH}
				Under Assumption 3, for any $f\in\mathcal H$, we have 
				\begin{equation}
				\| f \|_{\infty}=\sup_{x\in\mathcal X }|\langle f(\cdot), \Phi(\cdot,x)\rangle_{\mathcal H}|\leq R\|f\|_{\mathcal H}.
				\end{equation}
				and consequently $\|f\|_{\mathscr L^2_{P_{tr}}} \leq R\|f\|_{\mathcal H}$ as well.
				
			\end{lemma}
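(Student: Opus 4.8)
The plan is to combine the reproducing property of the canonical feature map with the Cauchy--Schwarz inequality and the uniform bound on $\|\Phi(\cdot)\|_{\mathcal H}$ granted by Assumption 3. First I would recall that, by construction of the canonical feature map, every $f\in\mathcal H$ satisfies the pointwise reproducing identity $f(x)=\langle f,\Phi(x)\rangle_{\mathcal H}$ for each $x\in\mathcal X$, where $\Phi(x)$ is identified with the function $K(\cdot,x)\in\mathcal H$. This is precisely the quantity appearing inside the supremum in the statement, so the identity establishes that the displayed expression indeed equals $\|f\|_\infty$.

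With this identity in hand, the next step is to fix an arbitrary $x\in\mathcal X$ and apply Cauchy--Schwarz in $\mathcal H$ to obtain
\[
|f(x)|=|\langle f,\Phi(x)\rangle_{\mathcal H}|\leq \|f\|_{\mathcal H}\,\|\Phi(x)\|_{\mathcal H}.
\]
Invoking the boundedness $\|\Phi(x)\|_{\mathcal H}\leq R$ from Assumption 3 then yields $|f(x)|\leq R\|f\|_{\mathcal H}$. Since $x$ was arbitrary, taking the supremum over $x\in\mathcal X$ gives the first claim $\|f\|_\infty\leq R\|f\|_{\mathcal H}$.

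For the second claim I would use only that $P_{tr}$ is a probability measure on $\mathcal X$, so that the $\mathscr L^2$ norm is dominated by the sup norm:
\[
\|f\|^2_{\mathscr L^2_{P_{tr}}}=\int_{\mathcal X}|f(x)|^2\,P_{tr}(dx)\leq \|f\|^2_\infty\int_{\mathcal X}P_{tr}(dx)=\|f\|^2_\infty\leq R^2\|f\|^2_{\mathcal H},
\]
and taking square roots completes the argument. There is no genuine obstacle here, as every step is a routine RKHS fact; the only points requiring mild care are that the feature map is the \emph{canonical} one (so the reproducing identity holds verbatim rather than up to an isometry) and that $P_{tr}$ being a probability measure is what makes the final integral bound immediate without any dependence on $\mathcal X$.
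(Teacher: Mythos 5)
Your proof is correct and is exactly the standard argument (reproducing property, Cauchy--Schwarz, the bound $\|\Phi(x)\|_{\mathcal H}\leq R$ from Assumption 3, and domination of the $\mathscr L^2_{P_{tr}}$ norm by the sup norm under a probability measure); the paper states this lemma without proof precisely because it is this routine RKHS fact. No gaps.
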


			\begin{lemma}[Azuma-Hoeffding]\label{azulema}
				Let $X_1,...,X_n$ be independent and identically distributed random variables with $0\leq X\leq B$, then
				\begin{equation}
				P(|\frac{1}{n}\sum_{i=1}^n \bm{x}_i-\mathbb E[X]|>\epsilon) \leq 2e^{-\frac{2n\epsilon^2}{B^2}}.
				\end{equation}
			\end{lemma}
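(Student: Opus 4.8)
The plan is to establish this via the classical exponential-moment (Chernoff) method, which reduces the tail bound to controlling the moment generating function of each bounded summand; despite the label, the i.i.d.\ hypothesis makes this the ordinary Hoeffding inequality rather than its martingale generalization.

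First I would reduce the two-sided bound to a one-sided one. Writing $S_n=\sum_{i=1}^n X_i$ and $\mu=\mathbb{E}[X]$, a union bound gives
$$P\left(\left|\frac{1}{n}S_n-\mu\right|>\epsilon\right)\leq P\left(\frac{1}{n}S_n-\mu>\epsilon\right)+P\left(\frac{1}{n}S_n-\mu<-\epsilon\right),$$
so it suffices to bound each term by $e^{-2n\epsilon^2/B^2}$. The second term follows from the first applied to the variables $B-X_i$, which are again i.i.d.\ and supported in $[0,B]$, since $\frac{1}{n}\sum_i(B-X_i)-(B-\mu)=-(\frac{1}{n}S_n-\mu)$. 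For the upper tail I would then apply Markov's inequality to the exponentiated deviation: for any $\lambda>0$,
$$P\left(\frac{1}{n}S_n-\mu>\epsilon\right)=P\left(e^{\lambda(S_n-n\mu)}>e^{\lambda n\epsilon}\right)\leq e^{-\lambda n\epsilon}\,\mathbb{E}\left[e^{\lambda(S_n-n\mu)}\right],$$
and by independence the expectation factorizes as $\prod_{i=1}^n\mathbb{E}[e^{\lambda(X_i-\mu)}]$.

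The crux is to bound each centered factor, for which I would invoke Hoeffding's lemma: for any $Y$ with $a\le Y\le b$ almost surely and $\mathbb{E}[Y]=0$, one has $\mathbb{E}[e^{\lambda Y}]\le e^{\lambda^2(b-a)^2/8}$. Applied to $Y=X_i-\mu$, which ranges over $[-\mu,\,B-\mu]$ of width exactly $B$, this yields $\mathbb{E}[e^{\lambda(X_i-\mu)}]\le e^{\lambda^2 B^2/8}$, hence $\mathbb{E}[e^{\lambda(S_n-n\mu)}]\le e^{n\lambda^2 B^2/8}$. Proving Hoeffding's lemma is where the real work lies, and I expect it to be the main obstacle: setting $\psi(\lambda)=\log\mathbb{E}[e^{\lambda Y}]$, one checks $\psi(0)=\psi'(0)=0$ and recognizes $\psi''(\lambda)$ as the variance of $Y$ under the exponentially tilted measure $d\mathbb{P}_\lambda\propto e^{\lambda Y}\,d\mathbb{P}$; since that tilted law is still supported in $[a,b]$, its variance is at most $(b-a)^2/4$ by the Popoviciu/bounded-range bound, and a second-order Taylor expansion gives $\psi(\lambda)\le\lambda^2(b-a)^2/8$.

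Finally I would combine and optimize. The estimates above give
$$P\left(\frac{1}{n}S_n-\mu>\epsilon\right)\leq\exp\!\left(-\lambda n\epsilon+\frac{n\lambda^2 B^2}{8}\right)$$
for every $\lambda>0$. Minimizing the exponent over $\lambda$ at $\lambda^\star=4\epsilon/B^2$ produces the value $-2n\epsilon^2/B^2$, so each one-sided tail is bounded by $e^{-2n\epsilon^2/B^2}$, and summing the two tails yields the claimed $2e^{-2n\epsilon^2/B^2}$.
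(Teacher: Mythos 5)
Your proof is correct in every detail: the union bound reduction, the symmetry argument via $B-X_i$ for the lower tail, Hoeffding's lemma proved through the tilted-measure variance bound $\psi''(\lambda)=\mathrm{Var}_{\mathbb{P}_\lambda}(Y)\leq (b-a)^2/4$, and the optimization $\lambda^\star=4\epsilon/B^2$ giving the exponent $-2n\epsilon^2/B^2$ all check out, and your remark that under the i.i.d.\ hypothesis this is plain Hoeffding rather than the martingale (Azuma) generalization is accurate despite the paper's label. The paper itself states this lemma without proof, citing it implicitly as classical, so there is no argument to compare against; yours is the canonical one it relies on.
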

			
			\begin{coro}\label{azucoro}
				Under the same assumption of Lemma \ref{azulema}, with probability at least $1-\delta$,
				\begin{equation}
				|\frac{1}{n}\sum_{i=1}^n \bm{x}_i-\mathbb E[X]| \leq B\sqrt{\frac{1}{2n} \log \frac{2}{\delta}}.
				\end{equation}
			\end{coro}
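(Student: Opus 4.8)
The plan is to obtain the stated high-probability bound by directly inverting the exponential tail bound of Lemma \ref{azulema}. The corollary is nothing more than a rewriting of that tail inequality in which the failure probability $\delta$, rather than the deviation level $\epsilon$, is treated as the free parameter, so the whole argument reduces to solving for the $\epsilon$ that makes the right-hand side equal to $\delta$.

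First I would invoke Lemma \ref{azulema}: under the stated assumptions that $X_1,\dots,X_n$ are i.i.d.\ with $0 \le X \le B$, we have for every $\epsilon > 0$
\begin{equation*}
P\Bigl(\bigl|\tfrac{1}{n}\sum_{i=1}^n \bm{x}_i - \mathbb E[X]\bigr| > \epsilon\Bigr) \le 2e^{-\frac{2n\epsilon^2}{B^2}}.
\end{equation*}
Next I would pin down $\epsilon$ by forcing the right-hand side to equal the target failure probability. Setting $2e^{-2n\epsilon^2/B^2} = \delta$ and taking logarithms gives $\frac{2n\epsilon^2}{B^2} = \log\frac{2}{\delta}$, whence
\begin{equation*}
\epsilon = B\sqrt{\tfrac{1}{2n}\log\tfrac{2}{\delta}}.
\end{equation*}
For this particular choice of $\epsilon$, Lemma \ref{azulema} asserts that the deviation exceeds $\epsilon$ with probability at most $\delta$. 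Passing to the complementary event, the deviation is at most $\epsilon$ with probability at least $1-\delta$, which is precisely the claimed inequality.

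Since the argument is just a routine inversion of an exponential tail, there is no genuine obstacle here; the only points that merit a moment's care are the sign bookkeeping in the logarithm (using $\log(\delta/2) = -\log(2/\delta)$ so that the exponent becomes positive after isolating $\epsilon^2$) and the passage from the strict exceedance event $\{\,\cdot > \epsilon\,\}$ to its complement $\{\,\cdot \le \epsilon\,\}$, which is legitimate precisely because the tail bound is applied at the specific level $\epsilon$ just computed.
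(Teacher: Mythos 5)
Your proof is correct and is exactly the standard inversion of the exponential tail bound that the paper implicitly relies on (the corollary is stated without proof, being an immediate consequence of Lemma \ref{azulema}): set $2e^{-2n\epsilon^2/B^2}=\delta$, solve for $\epsilon = B\sqrt{\tfrac{1}{2n}\log\tfrac{2}{\delta}}$, and take the complementary event. Your attention to the sign of the logarithm and the passage from the exceedance event to its complement is sound, and nothing further is needed.
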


			Moreover, an important $(1-\delta)$-probability bound we shall use later for $\hat L (\bm{\beta}_{|\vx_1^{tr},...,\vx^{tr}_{n_{tr}}}))$ follows from  \cite{yu2012analysis} (see also \cite{gretton2009covariate} and \cite{pinelis1994optimum}):
			\begin{align}\label{term3'}
			\hat L (\bm{\beta}_{|\bm{x}_1^{tr},...,\bm{x}^{tr}_{n_{tr}}}))=&\bigg\| \frac{1}{n_{tr}}\sum_{j=1}^{n_{tr}}\beta(\bm{x}_j^{tr})\Phi(\bm{x}_j^{tr}) - \frac{1}{n_{te}} \sum_{i=1}^{n_{te}} \Phi(\bm{x}_i^{te}) \bigg\|_{\mathcal H} \nonumber\\
			\leq &\sqrt {2\log\frac{2}{\delta}}R\sqrt{\bigg(\frac{B^2}{n_{tr}}+\frac{1}{n_{te}}\bigg)}.
			\end{align}

			\subsection{Learning Theory Estimates}
			
			To adopt the more realistic assumption as in \cite{yu2012analysis,cucker2007learning} that the true regression function $g(\cdot) \notin \mathcal H$ but rather $g(\cdot) \in Range(\mathcal T^{\frac{\theta}{2\theta+4}}_{ K})$, we need results from learning theory.

			First, define $\zeta\triangleq \frac{\theta}{2\theta+4}$ for some $\theta>0$ so that $0<\zeta <1/2$. Given $g(\cdot) \in Range(\mathcal T^{\zeta}_{ K})$ and $m$ training sample $ \{ (\bm{x}_j,y_j)\}_{j=1}^{m}$ (sampled from $P_{tr})$), we define $g_{\gamma}(\cdot) \in \mathcal H:\mathcal X\rightarrow \mathbb R$ to be 
			\begin{equation}\label{regreg}
			g_{\gamma}(\cdot)=\operatorname*{argmin}_{f\in\mathcal H} \bigg\{  \|f-g\|^2_{\mathscr L^2_{P_{tr}}} +\gamma \|f\|^2_{\mathcal H}\bigg\}
			\end{equation} 
			where $\|f-g\|_{\mathscr L^2_{P_{tr}}}=\sqrt{\mathbb E_{\bm{x}\sim P_{tr}}(f(\bm{x})-g(\bm{x}))^2}$ denotes the $\mathscr L^2$ norm under $P_{tr}$. On the other hand, $\hat g_{\gamma,data}(\cdot) \in \mathcal H$ is defined in (3)
			\begin{equation*}
			\hat g_{\gamma,data}(\cdot)=\operatorname*{argmin}_{f\in\mathcal H} \bigg\{ \frac{1}{m} \sum_{j=1}^{m} (f(\bm{x}_j)-y_j)^2 +\gamma \|f\|^2_{\mathcal H}\bigg\}.
			\end{equation*} 
			
			Moreover, following the notations in Section 4.5 of \cite{cucker2007learning}, given Banach space $(\mathscr L^2_{P_{tr}},\|\cdot\|_{\mathscr L^2_{P_{tr}}})$ and our kernel-induced Hilbert subspace $(\mathcal H,\|\cdot\|_{\mathcal H})$,  we define a $\tilde {\mathbb K}$-functional: $\mathscr L^2_{P_{tr}} \times (0,\infty) \rightarrow \mathbb R$ to be 
			\begin{equation*}
			\tilde{\mathbb K} (l,\gamma) \triangleq \inf_{f\in\mathcal H} \{\|l-f\|_{\mathscr L^2_{P_{tr}}}+\gamma\|f\|_{\mathcal H}\}
			\end{equation*}
			for $l(\cdot) \in \mathscr L^2_{P_{tr}}$ and $t>0$.
			For $0<r<1$, the interpolation space $(\mathscr L^2_{P_{tr}},\mathcal H)_r$ consists of all the elements $l(\cdot) \in \mathscr L^2_{P_{tr}}$ such that 
			\begin{equation}\label{kfinite}
			\|l\|_r \triangleq \sup_{\gamma>0}\frac{\tilde{\mathbb K}(l,\gamma)}{\gamma^r}<\infty.
			\end{equation}
			\begin{lemma}\label{2to1b}
				Define $\mathbb K :\mathscr L^2_{P_{tr}} \times (0,\infty) \rightarrow \mathbb R$ to be
				\begin{equation}
				\mathbb K(l,\gamma)\triangleq \inf_{f\in\mathcal H} \{\|l-f\|_{\mathscr L^2_{P_{tr}}}^2+\gamma \|f\|^2_{\mathcal H}\}.
				\end{equation}
				Then for any $l(\cdot)\in (\mathscr L^2_{P_{tr}},\mathcal H)_r$, we have
				\begin{equation}
				\sup_{\gamma>0}\frac{\mathbb K(l,\gamma)}{\gamma^r} \leq \bigg(\sup_{\gamma>0} \frac{\tilde{\mathbb K} (l,\sqrt{\gamma})}{(\sqrt{\gamma})^r}\bigg)^2= \|l\|_r^2<\infty.
				\end{equation}
			\end{lemma}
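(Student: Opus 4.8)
The plan is to reduce everything to a single elementary pointwise inequality and then propagate it through the infimum and the supremum, using only the monotonicity of $t\mapsto t^2$ on $[0,\infty)$ together with the nonnegativity of norms.

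First I would fix $\gamma>0$ and $f\in\mathcal H$, and abbreviate $a=\|l-f\|_{\mathscr L^2_{P_{tr}}}\geq 0$ and $b=\|f\|_{\mathcal H}\geq 0$. Since the cross term $2\sqrt{\gamma}\,ab$ is nonnegative, we have the elementary bound $a^2+\gamma b^2\leq (a+\sqrt{\gamma}\,b)^2$. Taking the infimum over $f\in\mathcal H$ on both sides, and using that squaring is monotone increasing on $[0,\infty)$ so that the infimum of $(a+\sqrt{\gamma}\,b)^2$ equals the square of the infimum of $a+\sqrt{\gamma}\,b$, I obtain the key pointwise estimate
\[
\mathbb K(l,\gamma)\leq \tilde{\mathbb K}(l,\sqrt{\gamma})^2 .
\]

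Next I would divide by $\gamma^r$, rewriting $\gamma^r=(\sqrt{\gamma})^{2r}$ so that the right-hand side becomes a perfect square, namely $\mathbb K(l,\gamma)/\gamma^r\leq \big(\tilde{\mathbb K}(l,\sqrt{\gamma})/(\sqrt{\gamma})^r\big)^2$. Taking the supremum over $\gamma>0$ and again invoking monotonicity of squaring on these nonnegative quantities yields
\[
\sup_{\gamma>0}\frac{\mathbb K(l,\gamma)}{\gamma^r}\leq \Big(\sup_{\gamma>0}\frac{\tilde{\mathbb K}(l,\sqrt{\gamma})}{(\sqrt{\gamma})^r}\Big)^2 .
\]
Finally, the change of variable $\eta=\sqrt{\gamma}$, which is a bijection of $(0,\infty)$ onto itself, identifies the inner supremum with $\|l\|_r$ as defined in (\ref{kfinite}), giving the stated equality; finiteness then follows from the hypothesis $l\in(\mathscr L^2_{P_{tr}},\mathcal H)_r$.

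I do not anticipate a serious obstacle, since the argument is purely algebraic. The only two places requiring care are the interchange of the infimum (respectively the supremum) with the squaring operation—both justified by the monotonicity of $t\mapsto t^2$ on the nonnegative reals together with the nonnegativity of the norms—and the change of variables that links the $\sqrt{\gamma}$-scaled supremum back to the definition of $\|l\|_r$. Neither step imposes any additional regularity on $l$, so the bound holds for every $l$ in the interpolation space.
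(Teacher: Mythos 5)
Your proposal is correct and follows essentially the same route as the paper: the paper's one-line proof uses $\sqrt{a+b}\leq\sqrt{a}+\sqrt{b}$ to get $\sqrt{\mathbb K(l,\gamma)}\leq\tilde{\mathbb K}(l,\sqrt{\gamma})$, which is exactly your pointwise bound $a^2+\gamma b^2\leq(a+\sqrt{\gamma}\,b)^2$ propagated through the infimum. Your version merely spells out the monotonicity and change-of-variable steps that the paper leaves implicit; there is no substantive difference.
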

			\begin{proof}
				It follows from $\sqrt{a+b} \leq \sqrt{a}+\sqrt{b}, \quad \forall a,b \geq 0$ that
				\begin{equation}
				\sqrt{\mathbb K(l,\gamma)} \leq \tilde{\mathbb K}(l,\sqrt{\gamma}).
				\end{equation}
				Thus, for any $l(\cdot)\in (\mathscr L^2_{P_{tr}},\mathcal H)_r$, we have
				\begin{equation}\label{2boundfinite}
				\sup_{\gamma>0}\frac{\mathbb K(l,\gamma)}{\gamma^r} \leq \bigg(\sup_{\gamma>0} \frac{\tilde{\mathbb K} (l,\sqrt{\gamma})}{(\sqrt{\gamma})^r}\bigg)^2= \|l\|_r^2<\infty.
				\end{equation}
			\end{proof}

			On the other hand, assuming $g(\cdot)\in Range(\mathcal T_K^{\frac{\theta}{2\theta+4}})$, it follows from the proof of Theorem 4.1 in \cite{cucker2007learning} that
			\begin{equation}
			g(\cdot)\in (\mathscr L_{P_{tr}}^2, \mathcal H^{+})_{\frac{\theta}{\theta+2}}
			\end{equation}
			where $\mathcal H^{+}$ is a closed subspace of $\mathcal H$ spanned by eigenfunctions of the kernel $K$ (e.g., $\mathcal H^{+}=\mathcal H$ when $P_{tr}$ is non-degenerate, see Remark 4.18 of \cite{cucker2007learning}). Indeed, the next lemma shows we can measure smoothness through interpolation space just as range space.
			\begin{lemma}\label{g_belong}
				Assuming $P_{tr}$ is non-degenerate on $\mathcal X$. Then if $g\in Range(\mathcal T_K^{\frac{\theta}{2\theta+4}})$, we have $g\in (\mathscr L^2_{P_{tr}}, \mathcal H)_{\frac{\theta}{\theta+2}}$. On the other hand, if $g\in (\mathscr L^2_{P_{tr}}, \mathcal H)_{\frac{\theta}{\theta+2}}$, then $g\in Range(\mathcal T_K^{\frac{\theta}{2\theta+4}-\epsilon})$ for all $\epsilon>0$.
			\end{lemma}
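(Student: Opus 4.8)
The plan is to reduce both inclusions to an explicit spectral computation of the K-functional followed by elementary scalar inequalities. Since $K$ is a Mercer kernel on the compact set $\mathcal X$ and $P_{tr}$ is non-degenerate, the operator $\mathcal T_K$ on $\mathscr L^2_{P_{tr}}$ is compact, self-adjoint and positive with strictly positive eigenvalues $\lambda_1\ge\lambda_2\ge\cdots>0$ and $\mathscr L^2_{P_{tr}}$-orthonormal eigenfunctions $\{\phi_i\}$ forming a complete basis; this is precisely where non-degeneracy enters, as it also gives $\mathcal H^{+}=\mathcal H$ in the sense of Remark 4.18 of \cite{cucker2007learning}. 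Writing $g=\sum_i a_i\phi_i$ with $\sum_i a_i^2=\|g\|^2_{\mathscr L^2_{P_{tr}}}<\infty$, I would recall the standard facts $\mathcal H=Range(\mathcal T_K^{1/2})$ with $\|f\|^2_{\mathcal H}=\sum_i c_i^2/\lambda_i$ for $f=\sum_i c_i\phi_i$, and $g\in Range(\mathcal T_K^{s})\iff \sum_i a_i^2\lambda_i^{-2s}<\infty$. Throughout I set $r=\frac{\theta}{\theta+2}\in(0,1)$, so the range index $\frac{\theta}{2\theta+4}$ equals $r/2$.

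Next I would compute the squared K-functional of Lemma \ref{2to1b} in closed form. Minimizing coordinatewise in $\mathbb K(g,\gamma)=\inf_{f\in\mathcal H}\{\|g-f\|^2_{\mathscr L^2_{P_{tr}}}+\gamma\|f\|^2_{\mathcal H}\}$ gives the optimizer $c_i=a_i\lambda_i/(\lambda_i+\gamma)$ and hence $\mathbb K(g,\gamma)=\sum_i a_i^2\frac{\gamma}{\lambda_i+\gamma}$. Applying $a+b\le\sqrt 2\sqrt{a^2+b^2}$ inside the infimum also yields the complement to Lemma \ref{2to1b}, namely $\tilde{\mathbb K}(g,\sqrt\gamma)\le\sqrt 2\,\sqrt{\mathbb K(g,\gamma)}$. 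Consequently $g\in(\mathscr L^2_{P_{tr}},\mathcal H)_r$, i.e. $\|g\|_r<\infty$, is equivalent up to the constant $\sqrt 2$ to $\sup_{\gamma>0}\gamma^{1-r}\sum_i\frac{a_i^2}{\lambda_i+\gamma}<\infty$, which is the workhorse characterization for both directions.

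For the forward inclusion, assume $g\in Range(\mathcal T_K^{r/2})$, i.e. $\sum_i a_i^2\lambda_i^{-r}<\infty$. The elementary estimate $\frac{\gamma^{1-r}}{\lambda+\gamma}\le\lambda^{-r}$, verified by splitting into the cases $\gamma\le\lambda$ and $\gamma\ge\lambda$, gives $\gamma^{1-r}\sum_i\frac{a_i^2}{\lambda_i+\gamma}\le\sum_i a_i^2\lambda_i^{-r}$ uniformly in $\gamma$; together with $\tilde{\mathbb K}(g,\sqrt\gamma)\le\sqrt 2\,\sqrt{\mathbb K(g,\gamma)}$ this shows $\|g\|_r<\infty$, so $g\in(\mathscr L^2_{P_{tr}},\mathcal H)_r$ with no loss in the index (this direction is also the content of Theorem 4.1 of \cite{cucker2007learning}).

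For the reverse inclusion, assume $g\in(\mathscr L^2_{P_{tr}},\mathcal H)_r$, so that Lemma \ref{2to1b} yields $M:=\sup_{\gamma>0}\gamma^{1-r}\sum_i\frac{a_i^2}{\lambda_i+\gamma}<\infty$. I expect \emph{this} to be the crux: the sup-bound only controls a weighted average of the $a_i^2$ and does not directly give summability of $a_i^2\lambda_i^{-r}$, so a dyadic blocking is needed, and it necessarily costs an arbitrarily small power. Setting $S_n=\{i:\lambda_i\in(2^{-n-1},2^{-n}]\}$ and $c_n=\sum_{i\in S_n}a_i^2$, the choice $\gamma=2^{-n}$ together with retaining only the block $S_n$ gives $2^{nr-1}c_n\le M$, i.e. $c_n\le 2M\,2^{-nr}$. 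Then for any $\delta>0$ one has $\lambda_i^{-(r-\delta)}<2^{(n+1)(r-\delta)}$ on $S_n$, so $\sum_i a_i^2\lambda_i^{-(r-\delta)}\le\sum_n 2^{(n+1)(r-\delta)}c_n\le 2M\,2^{r-\delta}\sum_n 2^{-n\delta}<\infty$. Taking $\delta=2\epsilon$ gives $\sum_i a_i^2\lambda_i^{-(r-2\epsilon)}<\infty$, that is $g\in Range(\mathcal T_K^{r/2-\epsilon})=Range(\mathcal T_K^{\frac{\theta}{2\theta+4}-\epsilon})$ for every $\epsilon>0$, completing the proof.
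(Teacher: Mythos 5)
Your proof is correct, and it is genuinely more self-contained than the paper's: the paper disposes of this lemma in one line by citing Theorem 4.1, Corollary 4.17 and Remark 4.18 of \cite{cucker2007learning}, whereas you unpack the underlying spectral argument. Your reduction is the right one: diagonalize $\mathcal T_K$ (non-degeneracy of $P_{tr}$ giving a complete orthonormal eigenbasis and $\mathcal H^{+}=\mathcal H$), compute $\mathbb K(g,\gamma)=\sum_i a_i^2\gamma/(\lambda_i+\gamma)$ coordinatewise, and note the two-sided equivalence $\sqrt{\mathbb K(l,\gamma)}\leq\tilde{\mathbb K}(l,\sqrt{\gamma})\leq\sqrt{2}\sqrt{\mathbb K(l,\gamma)}$, which correctly identifies membership in $(\mathscr L^2_{P_{tr}},\mathcal H)_r$ with $\sup_{\gamma>0}\gamma^{1-r}\sum_i a_i^2/(\lambda_i+\gamma)<\infty$. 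The forward direction via $\gamma^{1-r}/(\lambda+\gamma)\leq\lambda^{-r}$ is exactly the mechanism behind Theorem 4.1 of \cite{cucker2007learning}, and your dyadic-block argument for the converse correctly isolates why the $\epsilon$ loss is intrinsic: the supremum bound only controls block sums $c_n\lesssim 2^{-nr}$, and summing $2^{n(r-\delta)}c_n$ needs the geometric factor $2^{-n\delta}$ to converge. Two points worth a sentence each if you write this up: (i) the expansion $g=\sum_i a_i\phi_i$ requires that $g$ lie in the closed span of the eigenfunctions, which in the converse direction follows because $\mathbb K(g,\gamma)\leq\|g\|_r^2\gamma^r\to0$ forces $g$ into the $\mathscr L^2_{P_{tr}}$-closure of $\mathcal H$; (ii) the dyadic index $n$ may start at a negative value when $\lambda_1>1$, but the tail sum $\sum_{n\geq n_0}2^{-n\delta}$ still converges. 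What your route buys is a transparent, citation-free proof and an explicit demonstration that the $\epsilon$ in the converse cannot be removed by this method; what the paper's route buys is brevity and direct alignment with the learning-theory estimates it invokes elsewhere.
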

			\begin{proof}
				The proof follows from Theorem 4.1, Corollary 4.17 and Remark 4.18 of \cite{cucker2007learning}.
			\end{proof}

			Now we are ready to adopt some common assumptions and theoretical results from  learning theory in RKHS. They can be found in \cite{cucker2007learning,sun2009note,smale2007learning,yu2012analysis}.  First, given $g(\cdot) \in Range(\mathcal T^{\zeta}_{ K})$ and $m$ training sample $ \{ (\bm{x}_j,y_j)\}_{j=1}^{m}$ (sampled from $P_{tr})$), it follows from Lemma 3 of \cite{smale2007learning} (see as well Remark 3.3 and Corollary 3.2 in \cite{sun2009note}) that
			\begin{equation}\label{learn1}
			\|g_{\gamma}-g\|_{\mathscr L^2_{P_{tr}}} \leq C_2 \gamma^{\zeta}.
			\end{equation}
			Second, it follows from Theorem 3.1 in \cite{sun2009note} as well as \cite{smale2007learning,sun2010regularized} that
			\begin{equation}\label{learn1.5}
			\|g_{\gamma}-\hat g_{\gamma,data}\|_{\mathscr L^2_{P_{tr}}} \leq C_2'(\gamma^{-1/2}m^{-1/2}+\gamma^{-1}m^{-3/4}),
			\end{equation}
			and, by the triangle inequality,
			\begin{equation}\label{learn3}
			\mathbb\|g-\hat g_{\gamma,data}\|_{\mathscr L^2_{P_{tr}}} \leq C_3 (\gamma^{\zeta}+\gamma^{-1/2}m^{-1/2}+\gamma^{-1}m^{-3/4}).
			\end{equation}
			Notice here that by choosing $\gamma=m^{-\frac{3}{4(1+\zeta)}}$, we recover  Corollary 3.2 of \cite{sun2009note}. Finally it follows from Theorem 1 of \cite{smale2007learning}, we have
			\begin{equation}\label{learn2}
			\|g_{\gamma}-\hat g_{\gamma,data}\|_{\mathcal H} \leq C'_3 \gamma^{-1} m^{-1/2},
			\end{equation}
			with $C'_3=6 R \log \frac{2}{\delta}$. In fact, if we define $\sigma^2\triangleq \mathbb E_{\bm{x}\sim P_{tr}}\mathbb E_{Y|\bm{x}}(g(\bm{x})-Y)^2$, then Theorem 3 of \cite{smale2007learning} stated that
			\begin{equation}\label{learn4}
			\|g_{\gamma}-\hat g_{\gamma,data}\|_{\mathcal H} \leq C''_3 ((\sqrt {\sigma^2}+\|g_\gamma-g\|_{\mathscr L^2_{P_{tr}}})\gamma^{-1}m^{-1/2}+\gamma^{-1}m^{-1}).
			\end{equation}
			
			\subsection {Main Proofs}
			\begin{proof}[Proof of Theorem 1 and Corollary 1]
				
				If $g \in Range (\mathcal T_K^{\frac{\theta}{2\theta+4}})$ (i.e. $\zeta=\frac{\theta}{2\theta+4}$) and we set $h(\cdot)= g_{\gamma}(\cdot)$ and $\hat g=\hat g_{\gamma,\bm{X}_{NR}^{tr},\bm{Y}_{NR}^{tr}}$ for some $\gamma>0$, then
				\begin{align}\label{VR}
				&V_R(\rho)-\nu \nonumber\\
				=&\frac{1}{\lfloor\rho n_{tr}\rfloor}\sum_{j=1}^{\lfloor\rho n_{tr}\rfloor}\hat\beta(\bm{x}_j^{tr})(y_j^{tr}- g(\bm{x}_j^{tr})) +\frac{1}{\lfloor\rho n_{tr}\rfloor}\sum_{j=1}^{\lfloor\rho n_{tr}\rfloor}(\hat\beta(\bm{x}_j^{tr})-\beta(\bm{x}_j^{tr}))(g(\bm{x}_j^{tr})-h(\bm{x}_j^{tr})) \nonumber\\ 
				&+\frac{1}{\lfloor\rho n_{tr}\rfloor}\sum_{j=1}^{\lfloor\rho n_{tr}\rfloor}(\hat\beta(\bm{x}_j^{tr})-\beta(\bm{x}_j^{tr}))(h(\bm{x}_j^{tr})-\hat g(\bm{x}_j^{tr})) \nonumber\\
				&+\frac{1}{\lfloor\rho n_{tr}\rfloor}\sum_{j=1}^{\lfloor\rho n_{tr}\rfloor}\beta(\bm{x}_j^{tr})(g(\bm{x}_j^{tr})-\hat g(\bm{x}_j^{tr})) +\frac{1}{n_{te}} \sum_{i=1}^{n_{te}} \hat g(\bm{x}_i^{te})-\nu.
				\end{align}
				To bound terms in (\ref{VR}), we first use Corollary \ref{azucoro} to conclude that with probability at least $1-\delta$,
				\begin{align}\label{newterm1}
				|\frac{1}{\lfloor\rho n_{tr}\rfloor}\sum_{j=1}^{\lfloor\rho n_{tr}\rfloor} \hat \beta(\bm{x}_j^{tr})(y_j^{tr}-g(\bm{x}_j^{tr})) | \leq& B\sqrt{\frac{1}{\lfloor\rho n_{tr}\rfloor}\log \frac{2}{\delta}}=\mathcal O(n_{tr}^{-1/2}).
				\end{align}
				We hold on our discussion for the second term. For the third term, since $h,\hat g\in \mathcal H$, 
				\begin{align}\label{newterm2'}
				&\bigg|\frac{1}{\lfloor\rho n_{tr}\rfloor}\sum_{j=1}^{\lfloor\rho n_{tr}\rfloor}(\hat\beta(\bm{x}_j^{tr})-\beta(\bm{x}_j^{tr}))(h(\bm{x}_j^{tr})-\hat g(\bm{x}_j^{tr}))\bigg| \nonumber\\
				=& \bigg|\frac{1}{\lfloor\rho n_{tr}\rfloor}\sum_{j=1}^{\lfloor\rho n_{tr}\rfloor}(\hat\beta(\bm{x}_j^{tr})-\beta(\bm{x}_j^{tr}))\big\langle h-\hat g, \Phi(\bm{x}_j^{tr})\big\rangle_{\mathcal H}\bigg| \nonumber\\
				=& \bigg|\bigg\langle h-\hat g, \frac{1}{\lfloor\rho n_{tr}\rfloor}\sum_{j=1}^{\lfloor\rho n_{tr}\rfloor}(\hat\beta(\bm{x}_j^{tr})-\beta(\bm{x}_j^{tr})) \Phi(\bm{x}_j^{tr})\bigg\rangle_{\mathcal H}\bigg| \nonumber\\
				\leq & \|h-\hat g\|_{\mathcal H} (\hat L(\bm{\hat\beta})+\hat L (\bm{\beta}_{|\bm{x}_1^{tr},...,\bm{x}^{tr}_{\lfloor\rho n_{tr}\rfloor}})) \leq 2\| h-\hat g\|_{\mathcal H}\hat L (\bm{\beta}_{|\bm{x}_1^{tr},...,\bm{x}^{tr}_{\lfloor\rho n_{tr}\rfloor}}),
				\end{align}
				by definition of (1). Thus, when taking $h=g_\gamma$ and $\hat g=\hat g_{\gamma,\bm{X}_{NR}^{tr},\bm{Y}_{NR}^{tr}}$ for some $\gamma$, we can combine (\ref{term3'}) and (\ref{learn2}) to guarantee, with probability $1-2\delta$,
				\begin{align}\label{newterm2}
				&\bigg|\frac{1}{\lfloor\rho n_{tr}\rfloor}\sum_{j=1}^{\lfloor\rho n_{tr}\rfloor}(\hat\beta(\bm{x}_j^{tr})-\beta(\bm{x}_j^{tr}))(h(\bm{x}_j^{tr})-\hat g(\bm{x}_j^{tr}))\bigg| \nonumber\\
				\leq &\sqrt{8\log\frac{2}{\delta}}RC(1-\rho)^{-1/2}(\gamma^{-1}n^{-1/2}_{tr})\cdot \sqrt{\bigg(\frac{B^2}{n_{tr}}+\frac{1}{n_{te}}\bigg)}\nonumber\\
				=&\mathcal O(\gamma^{-1}n_{tr}^{-1/2}(n_{tr}^{-1}+n_{te}^{-1})^{\frac{1}{2}}).
				\end{align}
				
				For the last term $\tau\triangleq\frac{1}{\lfloor\rho n_{tr}\rfloor}\sum_{j=1}^{\lfloor\rho n_{tr}\rfloor}\beta(\bm{x}_j^{tr})(g(\bm{x}_j^{tr})-\hat g(\bm{x}_j^{tr})) +\frac{1}{n_{te}} \sum_{i=1}^{n_{te}} \hat g(\bm{x}_i^{te})-\nu$, the analysis relies the splitting of data, as we notice that 
				\begin{align}\label{zeromean}
				&\mathbb E_{|\bm X^{tr}_{NR}, \bm Y^{tr}_{NR}}\bigg[\frac{1}{\lfloor\rho n_{tr}\rfloor}\sum_{j=1}^{\lfloor\rho n_{tr}\rfloor}\beta(\bm{x}_j^{tr})(g(\bm{x}_j^{tr})-\hat g(\bm{x}_j^{tr})) +\frac{1}{n_{te}} \sum_{i=1}^{n_{te}} \hat g(X_i^{te})-\nu \bigg] \nonumber\\
				=& \mathbb E_{\bm{x}\sim P_{tr}}[\beta (\bm{x}) g(\bm{x})]-\nu- \mathbb E_{\bm{x}\sim P_{tr}}[\beta (\bm{x}) \hat g(\bm{x})] + \mathbb E_{\bm{x}\sim P_{te}}[ \hat g(\bm{x})] \nonumber\\ =&\mathbb E_{\bm{x}\sim P_{te}}[g(\bm{x})]-\nu-\mathbb E_{\bm{x}\sim P_{te}}[\hat g (\bm{x})]+\mathbb E_{\bm{x}\sim P_{te}}[\hat g (\bm{x})]\nonumber\\
				=&0.
				\end{align}
				Notice the second line follows since $\hat g(\cdot)$ is determined by $\{\bm X^{tr}_{NR}, \bm Y^{tr}_{NR}\}$ and thus is independent of $\{\bm X^{tr}_{KMM}, \bm Y^{tr}_{KMM}\}$ or $\{\bm X^{te}\}$. Thus, we have
				\begin{align}
				\text{Var} (\tau)=&\text{Var} (\mathbb E_{|\bm X^{tr}_{NR}, \bm Y^{tr}_{NR}}(\tau))+\mathbb E[ \text{Var} _{|\bm X^{tr}_{NR}, \bm Y^{tr}_{NR}}(\tau)] \nonumber\\
				=&\mathbb E [\text{Var} _{|\bm X^{tr}_{NR}, \bm Y^{tr}_{NR}}(\tau)] \nonumber\\
				=& \frac{1}{\lfloor \rho n_{tr}\rfloor} \mathbb E[\text{Var} _{\bm{x}\sim P_{tr}|\bm X^{tr}_{NR}, \bm Y^{tr}_{NR}}(\beta(\bm{x})(g(\bm{x})-\hat g(\bm{x})))]+\frac{1}{n_{te}}\mathbb E [\text{Var} _{\bm{x}\sim P_{te}|\bm X^{tr}_{NR}, \bm Y^{tr}_{NR}}(\hat g(\bm{x}))] \nonumber\\
				\leq & \frac{B^2}{\lfloor \rho n_{tr}\rfloor} \mathbb E_{\bm X^{tr}_{NR}, \bm Y^{tr}_{NR}} \|g-\hat g\|^2_{\mathscr L^2_{P_{tr}}} +\frac{1}{n_{te}} \mathbb E_{\bm X^{tr}_{NR}, \bm Y^{tr}_{NR}} \|\hat g\|^2_{\mathscr L^2_{ P_{te}}} \nonumber\\
				\leq & \frac{B^2}{\lfloor \rho n_{tr}\rfloor} \mathbb E_{\bm X^{tr}_{NR}, \bm Y^{tr}_{NR}} \|g-\hat g\|^2_{\mathscr L^2_{P_{tr}}} +\frac{B}{n_{te}} \mathbb E_{\bm X^{tr}_{NR}, \bm Y^{tr}_{NR}} \|\hat g\|^2_{\mathscr L^2_{ P_{tr}}},
				\end{align}
				and we can use the Chebyshev inequality and Lemma \ref{suptoH} to conclude, with probability at least $1-\delta$, 
				\begin{equation}\label{newterm3'}
				|\tau| \leq \sqrt{\frac{1}{\delta}}\sqrt{\frac{B^2}{\lfloor \rho n_{tr}\rfloor} \mathbb E_{\bm X^{tr}_{NR}, \bm Y^{tr}_{NR}} \|g-\hat g\|^2_{\mathscr L^2_{P_{tr}}} +\frac{BR^2}{n_{te}} },
				\end{equation}
				which becomes, by (\ref{learn3}), with probability $1-2\delta$,
				\begin{align}\label{newterm3}
				|\tau| \leq &\sqrt{\frac{1}{\delta}}\sqrt{\frac{B^2}{\lfloor \rho n_{tr}\rfloor} C(1-\rho)^{-3/4}(\gamma^{\zeta}+\gamma^{-1/2}n^{-1/2}_{tr}+\gamma^{-1}n^{-3/4}_{tr}) +\frac{BR^2}{n_{te}} }\nonumber\\
				=&\mathcal O((\gamma^{\zeta}+\gamma^{-1/2}n^{-1/2}_{tr}+\gamma^{-1}n^{-3/4}_{tr})n_{tr}^{-1/2}+n_{te}^{-1/2})
				\end{align}
				with $\zeta=\frac{\theta}{2\theta+4}$. 
				Now, to bound the second term $\frac{1}{\lfloor\rho n_{tr}\rfloor}\sum_{j=1}^{\lfloor\rho n_{tr}\rfloor}(\hat\beta(\bm{x}_j^{tr})-\beta(\bm{x}_j^{tr}))(g(\bm{x}_j^{tr})-h(\bm{x}_j^{tr}))$, we have

				\begin{align}\label{newterm4}
				&\frac{1}{\lfloor\rho n_{tr}\rfloor}\sum_{j=1}^{\lfloor\rho n_{tr}\rfloor}|(\hat\beta(\bm{x}_j^{tr})-\beta(\bm{x}_j^{tr}))(g(\bm{x}_j^{tr})- g_\gamma(\bm{x}_j^{tr}))| \nonumber\\
				\leq & \frac{B}{\lfloor\rho n_{tr}\rfloor}\sum_{j=1}^{\lfloor\rho n_{tr}\rfloor}|g(\bm{x}_j^{tr})- g_\gamma(\bm{x}_j^{tr})| \nonumber\\
				\leq & \big|\frac{B}{\lfloor\rho n_{tr}\rfloor}\sum_{j=1}^{\lfloor\rho n_{tr}\rfloor}|g(\bm{x}_j^{tr})- g_\gamma(\bm{x}_j^{tr})|-B\mathbb \|g- g_\gamma\|_{\mathscr L^1_{P_{tr}}}\big|+B\mathbb \|g- g_\gamma\|_{\mathscr L^1_{P_{tr}}} \nonumber\\
				\leq &\sqrt{\frac{1}{\delta}}\sqrt{\frac{B^2}{\rho n_{tr}}\|g-g_\gamma\|^2_{\mathscr L^2_{P_{tr}}}}+B\|g-g_\gamma\|_{\mathscr L^2_{P_{tr}}}\nonumber\\
				\leq &\sqrt{\frac{1}{\delta}}BC\gamma^{\zeta}\sqrt{\frac{1}{\rho n_{tr}}}+C\gamma^{\zeta}=\mathcal O(\gamma^\zeta)=\mathcal O(\gamma^{\frac{\theta}{2\theta+4}}).
				\end{align}
				where $\mathscr L^1_{P_{tr}}$ denotes the 1-norm $\mathbb E_{\bm{x}\sim P_{tr}}|g(\bm{x})-g_\gamma(\bm{x})|$. Notice the second-to-last line follows from the Chebyshev inequality, the Cauchy-Schwarz inequality, and the last line from (\ref{learn1}).

				Thus, when taking $h=g_{\gamma}$ and $\hat g=\hat g_{\gamma,\bm X^{tr}_{NR}, \bm Y^{tr}_{NR}}$ for some $\gamma>0$, we can combine (\ref{newterm1}), (\ref{newterm2}), (\ref{newterm3}) and (\ref{newterm4}) to have
				\begin{align}\label{final20}
				|V_{R}(\rho)-\nu| 
				= &\mathcal O(n_{tr}^{-\frac{1}{2}})+\mathcal O(\gamma^{\frac{\theta}{2\theta+4}})+\mathcal O(\gamma^{-1}n_{tr}^{-1/2}(n_{tr}^{-1}+n_{te}^{-1})^{\frac{1}{2}}) \nonumber\\
				&+\mathcal O((\gamma^{\frac{\theta}{2\theta+4}}+\gamma^{-1/2}n^{-1/2}_{tr}+\gamma^{-1}n^{-3/4}_{tr})n_{tr}^{-1/2}+n_{te}^{-1/2})\nonumber\\
				=& \mathcal O(n_{tr}^{-\frac{1}{2}}+n_{te}^{-\frac{1}{2}}+\gamma^{\frac{\theta}{2\theta+4}}+\gamma^{-\frac{1}{2}}n_{tr}^{-1}+\gamma^{-\frac{1}{2}}n_{tr}^{-\frac{1}{2}}n_{te}^{-\frac{1}{2}}),
				\end{align}
				after simplification. 
				Now, if we take $\gamma=n^{-\frac{\theta+2}{\theta+1}}$ where $n\triangleq \min(n_{tr},n_{te})$, then (\ref{final20}) becomes 
				\begin{align}\label{final200}
				&|V_{R}(\rho)-\nu|\nonumber\\
				=&\mathcal O(n^{-\frac{1}{2}}+n^{-\frac{\theta}{2(\theta+1)}}+n^{\frac{\theta+2}{2(\theta+1)}}n^{-1})=\mathcal O(n^{-\frac{\theta}{2\theta+2}})=\mathcal O(n_{tr}^{-\frac{\theta}{(2\theta+2)}}+n_{te}^{-\frac{\theta}{(2\theta+2)}}),
				\end{align}
				
				which is the statement of the theorem. However, note that if we choose $\gamma=n^{-1}$, we would achieve the convergence rate of $V_{KMM}$ as $\mathcal O(n_{tr}^{-\frac{\theta}{(2\theta+4)}}+n_{te}^{-\frac{\theta}{(2\theta+4)}})$. Moreover if $\lim_{{n}\rightarrow \infty}{n^{\frac{6\theta+8}{3\theta+6}}_{te}}/{n_{tr}} \rightarrow 0$ and we choose $\gamma=n_{tr}^{-1}$, then the rate becomes $\mathcal O(n_{tr}^{-\frac{\theta}{2\theta+4}}+n_{te}^{-\frac{1}{2}})$.
			\end{proof}

			\begin{proof}[Proof of Proposition 1]
				Fixing $\gamma>0$, if $g\in\mathcal H $ $(i.e., g\in Range (\mathcal T _{K}^{\frac{\theta}{2\theta+4}}) \text{ with } \theta\rightarrow\infty)$, then by definition of $g_\gamma$ we would have
				\begin{align}\label{hbound}
				\|g_{\gamma}\|^2_{\mathcal H} \leq& \frac{ \|g_\gamma-g\|_{\mathscr L^2_{P_{tr}}}^2+\gamma \|g_\gamma\|^2_{\mathcal H}}{\gamma} \leq \frac{  \|g-g\|^2_{\mathscr L^2_{P_{tr}}} +\gamma \|g\|^2_{\mathcal H}}{\gamma} = \|g\|^2_{\mathcal H},
				\end{align}
				or equivalently $\|g_\gamma\|_{\mathcal H}=\mathcal O(1)$ since the fixed true regression function $\|g\|_{\mathcal H}=\mathcal O(1)$. Thus, a simplified analysis shows
				\begin{align}\label{VR0}
				V_R(\rho)-\nu = &\frac{1}{\lfloor\rho n_{tr}\rfloor}\sum_{j=1}^{\lfloor\rho n_{tr}\rfloor}\hat\beta(\bm{x}_j^{tr})Y_j^{tr}-\nu \nonumber\\
				&+\frac{1}{\lfloor\rho n_{tr}\rfloor}\sum_{j=1}^{\lfloor\rho n_{tr}\rfloor}\hat\beta(\bm{x}_j^{tr})\hat g(\bm{x}_j^{tr})-\frac{1}{ n_{te}}\sum_{i=1}^{n_{te}}\hat g(\bm{x}_i^{te})
				\end{align}
				Note that the first term on the right is nothing but the $V_{KMM}$ estimator with $100\times\rho$ percent of the training data and we shall denote it as $V_{KMM}(\rho)$ without ambiguity. For the second term, assuming $\hat g=\hat g_{\gamma,\bm{X}_{NR}^{tr},\bm{Y}_{NR}^{tr}}$, is bounded by
				\begin{align}\label{VR12}
				&\frac{1}{\lfloor\rho n_{tr}\rfloor}\sum_{j=1}^{\lfloor\rho n_{tr}\rfloor}\hat\beta(\bm{x}_j^{tr})\hat g(\bm{x}_j^{tr})-\frac{1}{ n_{te}}\sum_{i=1}^{n_{te}}\hat g(\bm{x}_i^{te})\nonumber\\
				=&\frac{1}{\lfloor\rho n_{tr}\rfloor}\sum_{j=1}^{\lfloor\rho n_{tr}\rfloor}\hat\beta(\bm{x}_j^{tr})\big\langle \hat g, \Phi(\bm{x}_j^{tr})  \big\rangle_{\mathcal H}-\frac{1}{ n_{te}}\sum_{i=1}^{n_{te}}\big\langle \hat g, \Phi(\bm{x}_i^{n_{te}})\big\rangle_{\mathcal H} \nonumber\\
				=& \bigg\langle \hat g, \frac{1}{\lfloor\rho n_{tr}\rfloor} \sum_{i=1}^{\lfloor\rho n_{tr}\rfloor}\hat\beta(\bm{x}_j^{tr}) \Phi (\bm{x}_j^{tr})-\frac{1}{n_{te}}\sum_{i=1}^{n_{te}}\Phi (\bm{x}_i^{te}) \bigg\rangle_{\mathcal H} \leq \|\hat g_{\gamma,\bm{X}_{NR}^{tr},\bm{Y}_{NR}^{tr}}\|_{\mathcal H} \hat L(\hat {\bm{\beta}}),
				\end{align}
				Then, by (\ref{VR0}) and (\ref{VR12}), we have
				\begin{align}\label{fat}
				|V_R(\rho)-\nu|\leq& |V_{KMM}(\rho)-\nu|+\hat L (\hat{\bm{\beta}})(\|g_\gamma-\hat g_{\gamma,\bm{X}_{NR}^{tr},\bm{Y}_{NR}^{tr}}\|_{\mathcal H}+\|g_\gamma\|_{\mathcal H})\nonumber\\
				=& \mathcal O(n_{tr}^{-\frac{1}{2}}+n_{te}^{-\frac{1}{2}}),
				\end{align}
				following (\ref{hbound}), (\ref{learn2}) and Theorem 1 of \cite{yu2012analysis}.
			\end{proof}
			
			\begin{proof}[Proof of Proposition 2]
				If the function $g$ only satisfies the condition $\mathcal A_\infty(g, F)\triangleq \inf_{\|f\|_{\mathcal H}\leq F} \|g-f\| \leq C (\log F)^{-s}$ for some $C,s>0$, then we again follow the analysis in the proof of Proposition 1 and arrive at the decomposition in (\ref{VR0})
				\begin{align}\label{fat2}
				|V_R(\rho)-\nu|\leq& |V_{KMM}(\rho)-\nu|+\hat L (\hat{\bm{\beta}})(\|g_\gamma-\hat g_{\gamma,\bm{X}_{NR}^{tr},\bm{Y}_{NR}^{tr}}\|_{\mathcal H}+\|g_\gamma\|_{\mathcal H})\nonumber\\
				=& \mathcal O(\log \frac{n_{tr}n_{te}}{n_{tr}+n_{te}})^{-s},
				\end{align}
				which is the rate of $V_{KMM}$ by Theorem 3 of \cite{yu2012analysis}.
			\end{proof}
			
			\begin{proof}[Proof of Theorem 2]
				
				Define $\epsilon\triangleq\sup_{\theta\in \mathcal D}\bigg|V_R( \theta) - \mathbb E[l'(X^{te},Y^{te};\theta)]\bigg| $. We have
				\begin{align}
				&\mathbb E[l'(X_{te},Y_{te};\hat\theta_R)]-\epsilon\leq V_R(\hat \theta_R) \leq V_R( \theta^\star) \leq \mathbb E[l'(X_{te},Y_{te};\theta^\star)]+\epsilon.
				\end{align}
				On the other hand, we know by the triangle inequality that $\epsilon$ is bounded by
				\begin{align*}
				&\sup_{\theta\in \mathcal D}\big| \frac{1}{\lfloor\rho n_{tr}\rfloor}\sum_{j=1}^{\lfloor\rho n_{tr}\rfloor}\hat\beta(\vx_j^{tr})l'(\vx_j^{tr},y_j^{tr};\theta)-\frac{1}{n_{te}}\sum_{i=1}^{n_{te}}l(\bm{x}_i^{te}; \theta)\big|\nonumber\\
				+&\sup_{\theta\in \mathcal D}\big|\frac{1}{\lfloor\rho n_{tr}\rfloor}\sum_{j=1}^{\lfloor\rho n_{tr}\rfloor}\hat\beta(\vx_j^{tr})\hat l(\vx_j^{tr};\theta)-\frac{1}{n_{te}} \sum_{i=1}^{n_{te}} \hat l(\bm{x}_i^{te};\theta)\big|+\sup_{\theta\in \mathcal D}\big| \frac{1}{n_{te}}\sum_{i=1}^{n_{te}}l(\bm{x}_i^{te}; \theta)-\mathbb E[l(X_{te};\theta)]\big|,
				\end{align*}
				where the first term is bounded by $\mathcal O( n_{tr}^{-\frac{1}{2}}+n_{te}^{-\frac{1}{2}})$ following Corollary 8.9 in \cite{gretton2009covariate}. Moreover, the second term is also $\mathcal O( n_{tr}^{-\frac{1}{2}}+n_{te}^{-\frac{1}{2}})$ as in  (\ref{VR12}) or Lemma 8.7 in \cite{gretton2009covariate}. For the last term, due to the Lipschitz and compact assumption, it follows from Theorem 19.5 of \cite{van2000asymptotic} (see also Example 19.7 of \cite{van2000asymptotic}) that function class $\mathcal G$ is $P_{te}$-Donsker, which means that
				\begin{equation*}
				\mathbb G_{n}(\theta)\triangleq \sqrt{n_{te}}\bigg(\frac{1}{n_{te}}\sum_{i=1}^{n_{te}}l(\bm{x}_i^{te}; \theta)-\mathbb E_{\vx\sim P_{te}}[l(\vx;\theta)]\bigg)
				\end{equation*}
				converges in distribution to a Gaussian Process $\mathbb G_\infty$ with zero mean and covariance function $\text{Cov}( \mathbb G_\infty(\theta_1),\mathbb G_\infty(\theta_2))=\mathbb E_{\vx\sim P_{te}}(l(\vx;\theta_1)l(\vx;\theta_2))-\mathbb E_{\vx\sim P_{te}}l(\vx;\theta_1)\mathbb E_{\vx\sim P_{te}}l(\vx;\theta_2)$. Notice $\mathbb G_\infty$ can be viewed as random function in $C(\mathcal D)$, the space of continuous and bounded function on $\theta$. Since for any $z\in C(\mathcal D)$, the mapping $z\rightarrow \|z\|_{\infty}\triangleq \sup_{\theta\in\mathcal D}z(\theta)$ is continuous with respect to the supremum norm, it follows from the continuous-mapping theorem that $n_{te}^{\frac{1}{2}}\sup_{\theta\in \mathcal D}\big| \frac{1}{n_{te}}\sum_{i=1}^{n_{te}}l(\bm{x}_i^{te}; \theta)-\mathbb E[l(X_{te};\theta)]\big|$
				converges in distribution to $\|\mathbb G_\infty\|_\infty$ which has finite expectations based on the assumptions on $\mathcal G$ (see, e.g., Section 14, Theorem 1 of \cite{lifshits2013gaussian}). Thus, by definition of convergence in distribution, for any $\delta>0$, we can find some constant $D'$ that
				\begin{equation}\label{eprocess}
				P(\|\mathbb G_n\|_\infty>D')=P(\|\mathbb G_\infty\|_{\infty}>D')+o(1)\leq \delta+o(1),
				\end{equation}
				which means, we can find some $N$ such that when $n_{te}>N$, 
				\begin{align*}
				P_{te}\big(\sup_{\theta\in \mathcal D}\bigg| \frac{1}{n_{te}}\sum_{i=1}^{n_{te}}l(\bm{x}_i^{te}; \theta)-\mathbb E[l(X_{te};\theta)]\bigg|>n_{te}^{-\frac{1}{2}}D'\big)=P_{te}(\|\mathbb G_n\|_\infty>D')
				\leq 2\delta,
				\end{align*}
				and consequently, with probability $1-2\delta$, we have 
				\begin{equation*}
				\sup_{\theta\in \mathcal D}\big| \frac{1}{n_{te}}\sum_{i=1}^{n_{te}}l(\bm{x}_i^{te}; \theta)-\mathbb E[l(X_{te};\theta)]\big|\leq n_{te}^{-\frac{1}{2}}D'.
				\end{equation*}
				In other words, we also have
				\begin{equation*}
				\sup_{\theta\in \mathcal D}\big| \frac{1}{n_{te}}\sum_{i=1}^{n_{te}}l(\bm{x}_i^{te}; \theta)-\mathbb E[l(X_{te};\theta)]\big|=\mathcal O(n_{te}^{-\frac{1}{2}}),
				\end{equation*}
				which concludes our proof.
			\end{proof}

		\end{document}